\def\1{\bm{1}}
\DeclareMathAlphabet{\mathsfit}{\encodingdefault}{\sfdefault}{m}{sl}
\SetMathAlphabet{\mathsfit}{bold}{\encodingdefault}{\sfdefault}{bx}{n}
\newcommand{\x}{\mathbf{x}}
\newcommand{\xs}{\mathbf{x_S}}
\newcommand{\xspi}{\mathbf{x_S^{(i)}}}
\newcommand{\xt}{\mathbf{x_T}}
\newcommand{\xtpi}{\mathbf{x_T^{(i)}}}
\newcommand{\xOne}{\mathbf{x_{{D}_{1}}}}
\newcommand{\xTwo}{\mathbf{x_{{D}_{2}}}}
\newcommand{\xsOne}{\mathbf{x_{{S}_{1}}}}
\newcommand{\xsTwo}{\mathbf{x_{{S}_{2}}}}
\newcommand{\xtOne}{\mathbf{x_{{T}_{1}}}}
\newcommand{\xtTwo}{\mathbf{x_{{T}_{2}}}}
\newcommand{\zOne}{\mathbf{z_{{D}_{1}}}}
\newcommand{\zTwo}{\mathbf{z_{{D}_{2}}}}
\newcommand{\zs}{\mathbf{z_{S}}}
\newcommand{\zsOne}{\mathbf{z_{S_1}}}
\newcommand{\zsTwo}{\mathbf{z_{{S}_{2}}}}
\newcommand{\ztOne}{\mathbf{z_{T_1}}}
\newcommand{\ztTwo}{\mathbf{z_{{T}_{2}}}}
\newcommand{\zsOnepi}{\mathbf{z_{S_1}^{(i)}}}
\newcommand{\ztOnepj}{\mathbf{z_{T_1}^{(j)}}}
\newcommand{\zsTwopi}{\mathbf{z_{S_2}^{(i)}}}
\newcommand{\zTwoH}{\mathbf{\widehat{z}_{D_2}}}
\newcommand{\zsTwoH}{\mathbf{\widehat{z}_{S_2}}}
\newcommand{\ztTwoH}{\mathbf{\widehat{z}_{T_2}}}
\newcommand{\zH}{\mathbf{\widehat{z}}}
\newcommand{\ztH}{\mathbf{\widehat{z}_{T}}}
\newcommand{\zsH}{\mathbf{\widehat{z}_{S}}}
\newcommand{\zdH}{\mathbf{\widehat{z}_D}}
\newcommand{\xT}{\mathbf{\widehat{z}}}
\newcommand{\z}{\mathbf{z}}
\newcommand{\Z}{Z}
\newcommand{\ZHat}{\widehat{Z}}
\newcommand{\gHat}{\widehat{g}}
\newcommand{\Gtwo}{{\bm{\gamma_2}}}
\newcommand{\Gone}{{\bm{\gamma_1}}}
\newcommand{\rpm}{\raisebox{.2ex}{$\scriptstyle\pm$}}
\newcommand{\pt}{p_T}
\newcommand{\ps}{p_S}
\newcommand{\m}{\mathbf{m}}
\newcommand{\mytag}[2]{%
  \text{#1}%
  \@bsphack
  \begingroup
    \@onelevel@sanitize\@currentlabelname
    \edef\@currentlabelname{%
      \expandafter\strip@period\@currentlabelname\relax.\relax\@@@%
    }%
    \protected@write\@auxout{}{%
      \string\newlabel{#2}{%
        {#1}%
        {\thepage}%
        {\@currentlabelname}%
        {\@currentHref}{}%
      }%
    }%
  \endgroup
  \@esphack
}
\newtheorem{assumption}{Assumption}
\newtheorem*{theorem*}{Theorem}
\newtheorem*{proposition*}{Proposition}
\newtheorem*{lemma*}{Lemma}
\begin{document}

\title{Unsupervised domain adaptation with non-stochastic missing data}
\author{Matthieu Kirchmeyer\inst{1,2} \and Patrick Gallinari\inst{1,2} \and Alain Rakotomamonjy\inst{2,3} \and Amin Mantrach\inst{4}}
\institute{Sorbonne Université, CNRS, LIP6, F-75005 Paris, France \and Criteo AI Lab, Paris, France \and Université de Rouen, LITIS,  France \and Amazon, Luxembourg}
\authorrunning{Matthieu Kirchmeyer et al.} % if too long for running head

\maketitle

\begin{abstract}
We consider unsupervised domain adaptation (UDA) for classification problems in the presence of missing data in the unlabelled target domain. More precisely, motivated by practical applications, we analyze situations where distribution shift exists between domains and where some components are systematically absent on the target domain without available supervision for imputing the missing target components. We propose a generative approach for imputation. Imputation is performed in a domain-invariant latent space and leverages indirect supervision from a complete source domain. We introduce a single model performing joint adaptation, imputation and classification which, under our assumptions, minimizes an upper bound of its target generalization error and performs well under various representative divergence families ($\mathcal{H}$-divergence, Optimal Transport). Moreover, we compare the target error of our Adaptation-imputation framework and the “ideal” target error of a UDA classifier without missing target components. Our model is further improved with self-training, to bring the learned source and target class posterior distributions closer. We perform experiments on three families of datasets of different modalities: a classical digit classification benchmark, the Amazon product reviews dataset both commonly used in UDA and real-world digital advertising datasets. We show the benefits of jointly performing adaptation, classification and imputation on these datasets.
\end{abstract}

\section{Introduction}

Motivated by real applications, we consider a classification problem where: (1) a source and target domain are available with observed source labels and missing target labels, (2) a distribution shift exists between source and target on joint distributions in the input and label space, (3) source input data are fully available while target data have missing input components, which cannot be measured on this domain and (4) there is no possible supervision in the target domain for imputation, thus requiring indirect supervision from the source domain. Furthermore, unobserved features contain complementary information not present in the observed ones so that the former cannot be inferred directly from the latter. (1) and (2) correspond to the classical setting of unsupervised domain adaptation, (3) corresponds to a missing data imputation problem on the target with the difficulty (4). \cite{Rubin1976, Little2002} distinguish three categories of missing data problems based on a missingness mechanism denoted $\phi$. Let $\m$ define a pattern of missing data, $\phi$ defines the conditional distribution $p_{\phi}(\m | \x)$ where $\x$ represents a sample. Missing Completely at Random (MCAR) problems verify $\forall \x, p_{\phi}(\m|\x)=p_{\phi}(\m)$, Missing At Random, $\forall \x, p_{\phi}(\m|\x)=p_{\phi}\left(\m| \x^{\mathrm{obs}}\right)$ with $\x^{\mathrm{obs}}$ the observed feature and Missing Not At Random covers all the other cases. The key idea behind Rubin's theory is that $\m$ is a random variable with a probability distribution and specific imputation approaches were developed for each missingness setting. We consider the setting where target data have systematically missing input components. This corresponds to MCAR with the additional difficulty that $\m$ is deterministic, not stochastic. This problem is more difficult than classical MCAR as neither classical maximum likelihood solutions nor stochasticity in missing features can be used to reconstruct the missing information. While general adaptation and imputation problems were considered independently, there are several instances where they occur simultaneously. This has seldom been analyzed and only for specific cases. We propose a principled solution to this problem under non-stochastic missingness and present practical situations where this occurs. 

There are many problems where specific features in collected data may be systematically absent on a domain. In the literature, this setting is mostly considered when dealing with data with multiple modalities. For example, in disease diagnosis in medical imaging \cite{Cai2018}, for some collected dataset, several modalities are present while they are absent on other datasets for which the corresponding equipment was unavailable. In multi-lingual text classification \cite{Doinychko2020} some collections may be available only for a limited set of languages. Similar considerations hold for recommendation in advertising \cite{Wang2018} and object recognition with multi-sensor data \cite{Tran2017}. The situation which initially motivated our investigation, is the \textit{prospecting} setting in computational advertising. The classical framework for ads on the internet is \textit{retargeting}: users have already interacted with a set of merchant sites and they are targeted when they come back on one of these sites. Retargeting makes use of global user statistics collected on the whole set of merchant sites and of statistics from the specific site the user is browsing. Prospecting aims at targeting a user that visits a site for the first time \cite{Aggarwal2019}; while for such a user, features from his general behavior are available, there is no user information for the targeted site and the corresponding features are absent.
%where information is missing for users with no prior history on a merchant's website while their global clicking behavior across websites is known \cite{Aggarwal2019}.
The second issue considered is the distribution shift between domains. For instance, data may be collected on different devices as in medical imaging \cite{Chen2019} or background noise may affect each domain differently. This issue has given rise to the literature of Domain Adaptation when aiming at transferring knowledge from one domain to the other \cite{pan2010}. The ads case described above is subject to both missing data for prospecting users and distribution shift between retargeting and prospecting users as detailed in Section \ref{sec:ads_dataset}.

%More precisely, we consider UDA for classification when (1) part of input data is missing in the target domain thus requiring some form of imputation, (2) there is no possible supervision in the target domain for imputation thus requiring indirect supervision from the source domain, (3) there is a domain shift between the source and target distributions requiring domain adaptation and (4) target instances are unlabelled. More precisely, we consider this Adaptation-imputation setting for non-stochastic missing data, i.e. when the same features are systematically missing for all target samples.https://fr.overleaf.com/project/5f182ee70af7b900013ed770

%This contrasts with many imputation problems which use stochasticity in missing features to reconstruct the missing information.
We propose a model addressing the Adaptation-imputation problem defined by (1) to (4), which learns to perform imputation for the target domain with a conditional generative model. Imputation makes use of indirect supervision from the complete source domain. This allows us to handle non-stochastic missing data, while satisfying the constraints related to adaptation in a latent space and to classification. The imputation process plays an important role, providing us with information about the missing target data while contributing to the alignment and the reconstruction losses. Extensive empirical evidence on handwritten digits, Amazon product reviews and Click-Through-Rate (CTR) prediction domain adaptation problems illustrate the benefit of our model. The original contributions are the following:
\begin{itemize}
    %\item We introduce a new problem : joint UDA and imputation for classification motivated by practical applications;
    \item We propose a new end-to-end model for handling non-stochastic missing data with domain adaptation. It generates relevant missing information in the latent space conditionally on available information while aligning latent source and target marginals and classifying labelled instances. The joint missing-data and adaptation problem has been seldom considered and never in our context.
    \item We derive an adaptation and an imputation upper bounds. The first one upper bounds our model's target generalization error and is minimized explicitly by our training objective. The second one upper bounds an ideal target error corresponding to an UDA problem without missing features in the target domain.
    \item We improve this model by bringing the source and target class posteriors closer to one another with self-training; this is a useful heuristic when class posteriors mismatch.% leading to a loss of generalization. 
    \item We evaluate the model on academic benchmarks and on challenging real-world advertising data and illustrate on these datasets that conditional generative models improve regression-based approaches seen in the literature.
\end{itemize}

\section{Related work}
\label{sec:related_work}
Our problem is related to generic ML topics usually addressed separately e.g. domain adaptation and imputation and in an extend other secondary topics. We provide a brief overview of related contributions in the main topics below and in other minor topics in Appendix \ref{sec:related_work_app}.  

\paragraph{Unsupervised Domain Adaptation}
A number of learning methods approach UDA by weighting individual observations during training \cite{Cortes2014,Lipton2018}. Recent deep learning methods align the source and target distributions by embedding them in a joint latent space. There are two main directions for learning joint embeddings. One is based on adversarial training, making use of GAN extensions; the seminal work of \cite{Ganin2015} learns to map source and target domains onto a common latent space by optimizing jointly 1) an approximation of the $\mathcal{H}$-divergence between the source and target embeddings via adversarial training, 2) a classification term on source data embeddings. This work has been extended in several papers \cite{Tzeng2017,Long2018}. The other direction directly exploits explicit distance measures between source and target representations using Integral Probability Metrics such Maximum Mean Discrepancy \cite{MMDLong} or Wasserstein distance \cite{shen2018,Damodaran2018}. These work consider full input data on both domains. %that full input data is available on both domains. Bousmalis2017

\paragraph{Imputation}
Data imputation is addressed by several methods \cite{Little2002,VanBuuren}. Most approaches consider a supervised setting where (1) paired or unpaired complete and incomplete data are available, (2) missingness corresponds to a stochastic process (e.g. a mask distribution for tabular data) and (3) imputation is performed in the original feature space. This is different from our setting when one considers (1) reconstruction in a latent space, (2) imputation for a classification task, (3) no direct supervision and (4) fixed missingness which prevents us from exploiting the statistics from different incomplete samples leading to a much more complex problem. Recently, generative models were adapted for data imputation, e.g. \cite{Yoon2018} and \cite{Mattei2019} for GANs and VAEs respectively. The general approach with generative models is to learn a distribution over imputed data which is similar to the one of plain data. This comes in many different instances and usually, generative training alone is not sufficient; additional loss terms are often used. In paired problems where each missing datum is associated to a plain version, a reconstruction term imposed by a MSE contraint is added \cite{pix2pix2016}; in unpaired problems a cycle-consistency loss is imposed \cite{CycleGAN2017}. \cite{Li2019,Pajot2019} are among the very few approaches addressing unsupervised imputation in which full instances are never directly used. Both extend AmbientGAN \cite{bora2018} and consider stochastic missingness. Our imputation problem is closer to the one addressed in some forms of inpainting \cite{Pathak2016}, missing view imputation \cite{Doinychko2020} or multi-modality missing data \cite{Cai2018}. These approaches are fully supervised. The latter considers, as we do, imputation when one modality is systematically absent, but on one domain only, i.e. without adaptation. \cite{Ding2014,Wei2019,Wei2019_2} are the only papers we are aware of that consider imputation as we do. \cite{Ding2014} considers low-rank constraints and dictionary learning to guide transfer and was not used here as a baseline due to a high complexity that prevents large-scale experiments. \cite{Wei2019,Wei2019_2} are close to our work but assume that missing data can be reconstructed from the observed one through regression. In our setting, this is not possible: given the observed features, there are multiple possible imputations for the missing features; regression is thus meaningless and one has to learn their distribution or at least some modes. This motivates learning a generative model. Moreover, in \cite{Wei2019,Wei2019_2} classification occurs as a downstream task whereas our approach is end-to-end for classification, adaptation and imputation. Finally, our method is theoretically justified and addresses a challenging large size application motivated by a concrete real-world problem never handled before.

\paragraph{Cold-start} Cold-start occurs when making predictions or recommendations when data from the item or user of interest is not available or was not observed in the training set. The standard hypothesis is i.i.d. data coming from the same domain. In recommender systems, several papers address cold-start and leverage auxiliary information about users or items e.g. user attributes, profile, social context or cross-domain information \cite{Barjasteh2015, Sahebi2013}. Cold-start is related to zero-shot learning with unobserved data where usual solutions learn a representation space using auxiliary knowledge e.g. grounded word embeddings with visual context \cite{Zablocki2019}. As for our problem, cold-start deals with non-stochastic missing data, but usually considers only one domain while we deal with distribution shift as well through adaptation.

\section{Problem definition}
\label{sec:objectives}

\paragraph{Notations} 
$\mathcal{X}, \mathcal{Y}$ denote the input and label space. We use $X, Y$ to denote random variables with values in $\mathcal{X}, \mathcal{Y}$. A domain $D$ is defined by a distribution $p_D(X)$ on $\mathcal{X}$ and a deterministic labeling function $f_D:\mathcal{X} \rightarrow \{1, ..., K\}$ where $K$ is the number of classes. $D$ will refer to either the source $S$ or target $T$ domain. Data from domain $D$ is $(\x_D,y_D) \in \mathbb{R}^n \times \{1,...,K\}$ where $n$ is the dimension of the input space, sampled from the domain's joint distribution $p_D(X,Y)$. In the UDA setting, target labels are unknown. We consider that $\x_D$ has two components, $\x_D=(\xOne, \xTwo)$; $X_1, X_2$ refer to each component with values in $\mathcal{X}_1, \mathcal{X}_2$. Given input $\x \in \mathbb{R}^n$, $\mathbf{m} \in \{0,1\}^n$ is a binary mask indicating which entries of $\x$ are missing (1 for missing and 0 for observed). We define $\mathcal{Z}= \mathcal{Z}_1 \times \mathcal{Z}_2$ as the representation space built with a feature extractor. Assuming both components $(\xOne, \xTwo)$ are observed, we define $g$ as
\begin{equation}
    \begin{aligned}
        g:\mathcal{X}_1 \times \mathcal{X}_2  & \rightarrow \mathcal{Z}_1 \times \mathcal{Z}_2 \\
        (\xOne, \xTwo) &\mapsto (g_1(\xOne), g_2(\xTwo))
    \end{aligned}
    \label{eq:g}
\end{equation}
where 
%$g_1:\mathcal{X}_1 \rightarrow \mathcal{Z}_1$, $g_2:\mathcal{X}_2 \rightarrow \mathcal{Z}_2$ and
$\zOne=g_1(\xOne)$, $\zTwo=g_2(\xTwo)$ with $\Z_1, \Z_2$, the corresponding random variables.
%built from $X_1, X_2$ with $g_1, g_2$ and values in $\mathcal{Z}_1, \mathcal{Z}_2$.
This is illustrated in Figure \ref{fig:notations} (b) using examples from the \texttt{digits} dataset. While $X_2$ is available on the source domain, it is absent on the target domain. As detailed in Section \ref{sec:training}, we will learn to perform imputation in the latent $\mathcal{Z}$ space via a generative network $r$ operating on $\mathcal{Z}$. For this we will introduce a  
%We consider settings where $\xTwo$ is missing while $\xOne$ is always observed and define a 
mapping $\gHat$ as follows:
%imputing the missing $\xTwo$ in the latent space.
\begin{equation}
    \begin{aligned}
        \gHat:\mathcal{X}_1 &\rightarrow \mathcal{Z}_1 \times \mathcal{Z}_2\\
        \xOne &\mapsto (g_1(\xOne), r \circ g_1(\xOne))
    \end{aligned} 
    \label{eq:gHat}
\end{equation}
where $g_1:\mathcal{X}_1 \rightarrow \mathcal{Z}_1$, $r:\mathcal{Z}_1\rightarrow \mathcal{Z}_2$ and $\zTwoH=r\circ g_1(\xOne)$. $\ZHat_2$ is the corresponding random variable built from $X_1$ with $r \circ g_1$ with values in $\mathcal{Z}_2$ and $\ZHat=(\Z_1, \ZHat_2)$. This is illustrated in Figure \ref{fig:notations} (a). For reasons detailed later, this mapping $r\circ g_1(\cdot)$ will be used on both $S$ and $T$.

\begin{figure}[h!]
    \centering
    \includegraphics[width=0.7\textwidth]{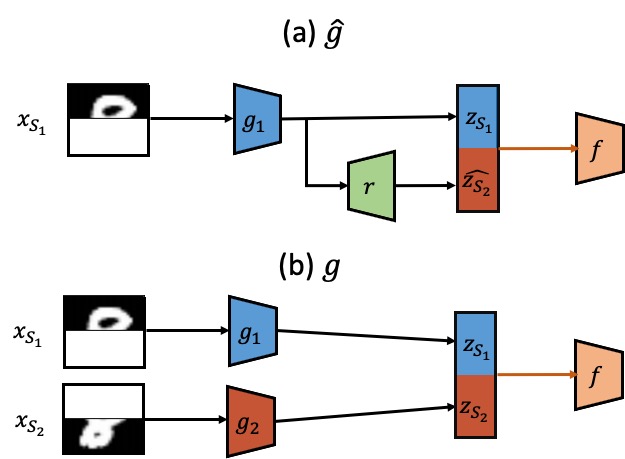}
    \caption{Encoding of an input source digit $\xs=(\xsOne, \xsTwo)$ with $\gHat$ (a) and $g$ (b). $g_1$ encodes the first part of the input $\xsOne$ into $\zsOne$. The second latent component is either built by encoding $\xsTwo$ with $g_2$ as $\zsTwo$ (b) or with reconstruction via $r\circ g_1$ as $\zsTwoH$ (a). These latent components $\zsH$ (a), $\zs$ (b) are fed directly into a classifier $f$.}
    \label{fig:notations}
\end{figure}

\paragraph{Assumptions} 
Let us now introduce formally the different assumptions underlying our context and model.
We address UDA with non-stochastic missing target features and aim at finding a single hypothesis $h_\gHat: \mathcal{X} \rightarrow \{0, ..., K\}$ of the form $f \circ \gHat$, with $\gHat$ the feature extractor defined in \eqref{eq:gHat} and $f$ the classifier with low target risk. Since the problem is under-specified, one has to make assumptions to define it properly:

\begin{assumption}
    Labelled source data $\xs$ are fully observed while unlabelled target data are partially observed with $\xtTwo$ missing. The missingness mechanism corresponds to Missing Completely At Random \cite{Little2002} on the target with fixed missingness pattern. Thus the distribution statistics of the missing data cannot be leveraged for imputation and we can only resort to indirect supervision with adaptation as in Section \ref{sec:training}: we consider only statistics from the source to infer the imputation mechanism as later explained.
    \label{ass:missingness}
\end{assumption}

\begin{assumption}
    The distribution of $X_2|X_1$ projected in the latent space with $g$ from \eqref{eq:g}, $p_D(\Z_2|\Z_1)$, is multi-modal and $Z_1$ and $Z_2$ are not statistically independent. This allows to impute $\zTwo$ given $\zOne$. However, regression on $\zOne$ cannot recover all modes as MSE produces blurry reconstructions by averaging modes. For example, assuming the feature variables $\Z_1, \Z_2$ encode the contour of the top, respectively bottom of a digit, given the bottom contours of a digit, we can reconstruct several candidates of the top contours (the bottom half of 7 can either be reconstructed into 1 or 7; regression will lead to a blurry digit averaging these two modes). As mentioned, this uncertainty is present for all our datasets.
    \label{ass:multi_modal}
\end{assumption}

\begin{assumption}
    The distribution of $X_2|X_1$ projected in the latent space with $g$ in \eqref{eq:g} is the same across domains i.e. $p_S(\Z_2|\Z_1)=p_T(\Z_2|\Z_1)$. This allows us to make use of the source domain information (with available supervision) to infer the target conditional distribution and recover the missing latent component useful for classification. For example, assuming the feature variables $\Z_1, \Z_2$ encode the contour of the top, respectively bottom of a digit, $p(\Z_2|\Z_1)$ is the distribution of the contours of the bottom of the digit given those of the top; it is reasonable to assume that this distribution is the same across domains.
    \label{ass:cond_hyp}
\end{assumption}

\begin{assumption}
    Covariate shift is valid in the latent space obtained with $\gHat$ in \eqref{eq:gHat} i.e. $p_S(Y|\ZHat)=p_T(Y|\ZHat)$ while $p_S(\ZHat) \neq p_T(\ZHat)$. Thus, we can find a classifier $f\circ \gHat$ with low source and target error; this is a common assumption for standard UDA methods. %We\alain{La suite n'est pas une hypothèse, je le mettrai ailleurs} propose a self-training procedure to heuristically deal with settings where $p_S(Y|\ZHat) \neq p_T(Y|\ZHat)$.
    \label{ass:CoS}
\end{assumption}

\section{Adaptation-Imputation model}
As several generative approaches to UDA, we project source and target data onto a common latent space in which data distributions from the two domains should match and learn a classifier using source labels. Our novelty is to offer a solution to deal with datasets with systematically missing data in the target domain. Our model, denoted \textit{Adaptation-Imputation}, is trained to perform three operations jointly: imputation of missing information, alignment of the distributions of both domains and classification of source instances. The three operations are performed in a joint embedding space and all components are trained together with shared parameters. The term imputation is used here in a specific sense: our goal is to recover information from $\xtTwo$ that will be useful for adaptation and for the target data classification objective and not to reconstruct the whole missing $\xtTwo$. This is achieved via a generative model, which for a given datum in $T$ and conditionally on the available information $\xtOne$, attempts to generate the missing information. Because $\xtTwo$ is systematically missing for $T$ (Assumption \ref{ass:missingness}), there is no possible supervision with target samples; instead we use indirect supervision from source samples while transferring to the target. We consider two variants of the same model based on different divergence measures between distributions: the $\mathcal{H}$-divergence approximated through adversarial training (\texttt{ADV}) and the Wasserstein distance (\texttt{OT}) computed through the primal by finding a joint coupling matrix $\gamma$ with linear programming \cite{peyre2019computational}. Our two models can be seen respectively as extensions of DANN \cite{Ganin2015} and DeepJDOT \cite{Damodaran2018} to the missing data problem. We only describe the \texttt{ADV} version in the main text, the extension to \texttt{OT} is detailed in Appendix \ref{sec:ot_formulation}. Results for both models are in Section \ref{sec:experiments}. % in Table \ref{table:imputation_ablation}. 

\subsection{Inference}
The latent space representations are denoted $\zdH=(\zOne,\zTwoH)$. $\zOne = g_1(\xOne)$ is the mapping of the observed component $\xOne$ onto the latent space and $\zTwoH = r \circ g_1(\xOne)$ is the second component's latent representation generated conditionally on $\xOne$ through generator $r$, as later described. At inference, given $\xtOne$, we generate $\ztH=(\ztOne,\ztTwoH)$ where $\ztTwoH$ encodes part of the missing information $\xtTwo$ in $\xt$ (Figure \ref{fig:imput_model} (b)). Finally $\ztH$ is fed to the classifier $f$.

\subsection{Training}
\label{sec:training}
For simplicity, we describe each component in turn but please note that they all interact and that their parameters are all optimized according to the three objectives mentioned above. The interaction is discussed after the description of each individual module. The model's components are illustrated in Figure \ref{fig:imput_model} (a).

\paragraph{Adaptation}
Adaptation aligns the distributions of $\zsH$ and $\ztH$ in the latent space.
For $\texttt{ADV}$, alignment is performed via an adversarial loss operating on the latent representations
\begin{equation}
    L_1= \mathbb{E}_{\xs \sim \ps(X)} \log D_1(\zsH)+ \mathbb{E}_{\xt \sim \pt(X)} \log(1-D_1(\ztH))    
    \label{eq:adap_l1}
\end{equation}
where $D_1(\zH)$ represents the probability that $\zH$ comes from $S$ rather than $T$. 

\paragraph{Imputation}
Imputation generates an encoding $\ztTwoH$ for the missing information, conditioned on the available $\xtOne$ thanks to a generative model $r$. Since we never have access to $\xtTwo$, we develop a distant learning strategy: we learn imputation on $S$ through $\zsTwoH=r \circ g_1(\xsOne)$ (Figure \ref{fig:imput_model}) and then transfer to the target domain ($\ztTwoH$ on the figure) via adaptation. For that we perform two operations in parallel. First, we align  the distributions of $\zsTwoH$ and $\zsTwo= g_2(\xsTwo)$ which is the encoding of $\xsTwo$, using an adversarial loss and discriminator $D_2$ ($L_{ADV}$ on Figure \ref{fig:imput_model}). As alignment acts globally on distributions we have no guarantee that $\zsTwoH$ will be associated to the corresponding $\zsOne$. We then  enforce a one-to-one relationship by  associating a $\zsTwoH$ to its specific $\zsOne$. For that, we use a reconstruction term, the MSE distance between $\zsTwo$ and $\zsTwoH$ ($L_{MSE}$ on Figure \ref{fig:imput_model}). This guarantees that the imputed $\zsTwoH$ truly represents information in $\zsTwo$. The learned mappings are used to perform imputation on the target data $\ztTwoH= r \circ g_1(\xtOne)$. The imputation loss $L_2$ has thus two terms: an adversarial term $L_{ADV}$ for aligning $\zsTwo$ and $\zsTwoH$; and a reconstruction term $L_{MSE}$:
\begin{align}
    L_2 &= L_{ADV} + \lambda_{MSE} \times L_{MSE} \label{eq:imput_l2} \\
    L_{ADV} &= \mathbb{E}_{\xsTwo\sim \ps(X_2)} \log D_2(\zsTwoH)+\mathbb{E}_{\xsOne\sim \ps(X_1)} \log(1-D_2(\zsTwo)) \label{eq:imput_adv_l2}\\
    L_{MSE} &= \mathbb{E}_{\xs \sim \ps(X)}\left\|\zsTwo-\zsTwoH\right\|_2^2 \label{eq:imput_mse_l2}
\end{align}
where $\lambda_{MSE}$ weights the regression term over the generative term. Imputation and adaptation influence each other and both are also influenced by classification described below. The latter forces the generated $\zsTwoH$ to contain information about $\xsTwo$ relevant for the classification task. This information is transferred via adaptation to the target when generating $\ztTwoH$.

\paragraph{Classification}
The last component is a classifier $f$, trained on source mappings $\zsH$ as done in classic UDA. The corresponding loss, with $L_{Disc}$ a cross-entropy loss, is 
\begin{equation}
    L_3=\mathbb{E}_{(\xs, y_S) \sim \ps(X,Y)}L_{Disc}(f(\zsH), y_S)
    \label{eq:classif_l3}
\end{equation}

\paragraph{Overall loss}
$L$ is the weighted sum of the adaptation, imputation and classification losses  
\begin{equation}
    L = \lambda_1 \times L_1 + \lambda_2 \times L_2 + \lambda_3 \times L_3
    \label{eq:overall_loss}
\end{equation}
with $\lambda_1, \lambda_2, \lambda_3$ some hyperparameters and we solve
\begin{equation}
    \min_{g_{1},g_{2}, r, f} ~\max_{D_1,D_2} L
    \label{eq:optim_pb}
\end{equation}

\begin{figure}[h!]
    \centering
    \includegraphics[width=0.8\textwidth]{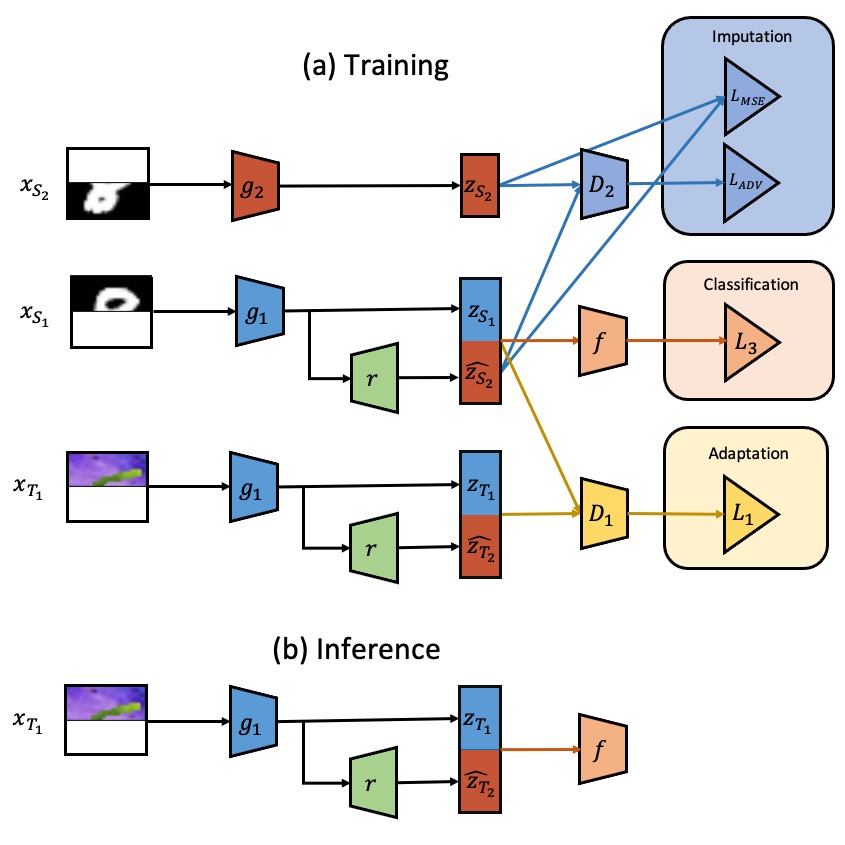}
    \caption{\textit{Adaptation-Imputation} model. The first column represents examples of raw data with missing and non-missing parts. Trapezoidal boxes represent mapping functions. Triangles in the last column represent loss functions used only for training. At training, the top-row depicts how $\xsTwo$ is mapped into the latent space with $g_2$. The second and third rows show how $\zsH$ and $\ztH$ are obtained. All these imputed and mapped source and target samples are then used in training losses. At inference, we only need the learned $g_1$ and $r$ for mapping the target example with missing data into the latent space and $f$ for predicting its class.}
    \label{fig:imput_model}
\end{figure}

\paragraph{Interaction between the model's components} 
Mappings $g_1, g_2, r$ appear in the three terms of $L$, meaning that they should learn to perform the three tasks simultaneously. $g_1$ maps $\xsOne$ and $\xtOne$ onto the latent space, the embeddings being denoted respectively $\zsOne$ and $\ztOne$. $r$ learns to generate missing information $\zTwoH$ from $\zOne$. $\zdH$ is generated to fulfill the classification objective.
$g_2$ should fulfill the imputation objective while preserving part of the information present in $\xsTwo$. Our model uses a unique mapping $g_1$ for both $S$ and $T$; compared to using separate mappings, this reduces the number of parameters and was found to perform as well.

\paragraph{Implementation}
For adversarial training, discriminators $D_1$ (adaptation) and $D_2$ (imputation) are implemented by binary classifiers. $D_1$ is trained to distinguish $\zsH$ from $\ztH$ mappings while $D_2$ is trained to separate imputed $\zsTwoH$, generated from $\xsOne$, and $\zsTwo$, a direct embedding of $\xsTwo$. We use gradient reversal layers \cite{Ganin2015} for implementing the min-max condition on $D_1$ and $D_2$. To stabilize adversarial training, we update progressively $\lambda_1, \lambda_2$, respectively the hyperparameter for the adaptation loss $L_1$ and the imputation loss $L_2$, from 0 to 1 when updating the feature extractors $g_1, g_2$. Both $\lambda_1$ and $\lambda_2$ are set to 1 when updating the discriminators $D_1, D_2$ per \cite{Ganin2015}. Moreover, we decay all learning rates. We fix $\lambda_3=1$ to avoid additional tuning and only tune $\lambda_{MSE}$ as shown in the ablation study in Table \ref{table:imputation_ablation} and Figure \ref{fig:ablation_digits}. All components are trained jointly after first initializing the classifier $f$ and feature extractors $g_1, g_2$ to minimize $L_3$ replacing $\zsTwoH$ with $\zsTwo$ such that discriminitative components are learned before joint adaptation and imputation. Appendix \ref{sec:implementation_details} provides details of all architectures and parameters and our code is available\footnote{\url{https://github.com/mkirchmeyer/adaptation-imputation}}.

\begin{algorithm}[h!]
    \caption{Adversarial Adaptation-Imputation training procedure}
    \label{alg:pseudo-code}
        \hspace*{\algorithmicindent} 
        $N$: number of epochs, $k$: batch size
        \begin{algorithmic}[1]
            \State Initialize $f, g_1, g_2$ by minimizing $L_3$ replacing $\zsTwoH$ with $\zsTwo$
            \For{$n_{epoch} < N$}
                \State Sample $\{\mathbf{x_S^{(i)}}, y_S^{(i)}\}_{1 \leq i \leq k}$ from $\ps(X,Y)$
                \State Sample $\{\mathbf{x_{T}^{(j)}}\}_{1 \leq j \leq k}$ from $\pt(X)$ 
                \State Decay learning rate and update gradient scale at each batch
                \State Compute $L = \lambda_1 L_1 + \lambda_2 L_2 + \lambda_3 L_3$ performing joint adaptation, imputation, classification 
                \State Update $D_1$, $D_2$ by ascending $L$ through Gradient Reversal Layer 
                \State Update $f, g_1, g_2, h$ by descending $L$
            \EndFor   
        \end{algorithmic}
\end{algorithm}

\section{Theoretical insights}

\subsection{Target generalization error}
Given the model in Section \ref{sec:training}, we show in this section that, despite having only unlabelled target samples, we minimize the model's target classification error using source labels with an adaptation upper bound (Theorem \ref{theorem:adaptation_bound}), under our assumptions. We then show an imputation upper bound of the "ideal" target error obtained with all components observed (classical UDA setting) by our model's target error times a factor (Proposition \ref{prop:imputation_bound}). The analytical expression of this factor highlights the role of two components: imputation on the source and transfer of this imputation from the source to the target. In the optimal case, when the model perfectly recovers these two operations, we show that our model retrieves the ideal target error. These two bounds thus provide an approach with adaptation and imputation to minimize our model's target error and reach the ideal target error, using only missing target data and source supervision for both labels and imputation. %This factor assesses how our model reconstructs the missing target latent component from the observed component with imputation using available supervision from full source data. We define when equality between these two target errors occurs. 

\paragraph{Definitions}
First, we recall some definitions. $\gHat$ in \eqref{eq:gHat} maps the first component of a sample to its imputed latent representation. $\gHat$ can be applied to both source and target samples. On the other hand, $g$ in \eqref{eq:g} maps both input components on the latent space and is thus only applicable to source samples. In practise, $\gHat$ and $g$ share the same encoder for the first component $g_1$; the second encoder is respectively $r \circ g_1$ for $\widehat{g}$ and $g_2$ for $g$. The random variables associated to these projections are denoted respectively $\Z_2$, for the latent missing component built from $X_2$ with $g_2$ and $\ZHat_2$, for the reconstruction of $\Z_2$ from $X_1$ with $r \circ g_1$. $X_2$ is missing on $T$ but observed on $S$. Based on these mappings, we define the risk of a hypothesis $h$ on domain $D\in\{S,T\}$, either $h_g \in \mathcal{H}_{g}=\{f \circ g: f \in \mathcal{F}\}$ or $h_{\gHat} \in \mathcal{H}_{\gHat}=\{f \circ \gHat: f \in \mathcal{F}\}$, as its error under the true labeling function $f_D$ and $D$, i.e. 
$$\epsilon_D(h) \triangleq \epsilon_D(h, f_D) \triangleq \mathbb{E}_{\x \sim p_D(X)}[|h(\x)-f_D(\x)|]$$ In our case, as $h$ and $f_D$ are binary classification functions, this definition reduces to the probability that $h$ disagrees with $f_D$ under $p_{D}(X)$ 
$$\epsilon_D(h)=\mathbb{E}_{\x \sim p_D(X)}[|h(\x)-f_D(\x)|]=\mathbb{E}_{\x \sim p_D(X)}[\mathbb{I}(h(\x) \neq f_D(\x))]=\operatorname{Pr}_{\x \sim p_D(X)}(h(\x) \neq f_D(\x))$$

In the following, we describe the adaptation and imputation bounds. %However, it should be reminded that these bounds are intertwined.

\paragraph{Adaptation bound}
As target samples are unlabelled, we cannot directly minimize our model's target error, $\epsilon_T(f\circ \gHat)$. In practise, we upper bound $\epsilon_T(f\circ \gHat)$ in Theorem \ref{theorem:adaptation_bound} with adaptation. Adaptation is performed on both components despite target missingness thanks to imputation which reconstructs the missing latent component conditionally on the observed one.
\begin{theorem}[Proof in Appendix  \ref{sec:proof}]
    \label{theorem:adaptation_bound}
   Given $f \in \mathcal{F}, \gHat$ in \eqref{eq:gHat} and $\ps(\ZHat)$, $\pt(\ZHat)$ the latent marginal distributions obtained with $\gHat$.
    \begin{equation}
        \begin{aligned}
            \epsilon_T(f\circ \gHat) \leq \underbrace{\left[\epsilon_S(f\circ \gHat)+d_{\mathcal{F} \Delta \mathcal{F}}(\ps(\ZHat), \pt(\ZHat))+\lambda_{\mathcal{H}_{\gHat}}\right]}_{\mathrm{Domain~Adaptation~} \mytag{(DA)}{term_DA}}
        \end{aligned}
        \label{eq:uda_bound}
    \end{equation}
    with $\epsilon_{S}(\cdot), \epsilon_{T}(\cdot)$ the expected error under the labelling function $f_S, f_T$ on $S$, $T$ respectively; $\mathcal{F}\Delta\mathcal{F}$ the symmetric difference hypothesis space\footnote{$h \in \mathcal{F}\Delta\mathcal{F} \iff h(\x)=f_1(\x)\oplus f_2(\x)$ for some $f_1, f_2 \in \mathcal{F}$ where $\oplus$ is the XOR function.}; $d_{\mathcal{H}}$ the $\mathcal{H}$-divergence for $\mathcal{H}=\mathcal{F}\Delta\mathcal{F}$ and $\lambda_{\mathcal{H}_{\gHat}}=\min_{f' \in \mathcal{F}}\left[\epsilon_{S}(f' \circ \gHat)+\epsilon_{T}(f'\circ \gHat)\right]$, the joint risk of the optimal hypothesis.
\end{theorem}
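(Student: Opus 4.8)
This is a direct application of the Ben-David et al.\ domain adaptation bound, specialized to the feature map $\gHat$ and the hypothesis class $\mathcal{H}_{\gHat}=\{f\circ\gHat:f\in\mathcal{F}\}$. The key observation is that once we fix $\gHat$ from \eqref{eq:gHat}, both source and target distributions push forward to well-defined latent marginals $\ps(\ZHat)$ and $\pt(\ZHat)$ on $\mathcal{Z}_1\times\mathcal{Z}_2$, and any hypothesis $f\circ\gHat$ acts as an ordinary classifier on that common latent space. So the strategy is: (i) reduce the target error $\epsilon_T(f\circ\gHat)$ to a statement purely about the induced distributions on $\mathcal{Z}$, then (ii) invoke the classical $\mathcal{H}\Delta\mathcal{H}$-divergence bound on those induced distributions.

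\textbf{Step 1: reduce to the latent space.} First I would make precise that $\epsilon_D(f\circ\gHat)$ depends on $p_D(X)$ only through the pushforward $p_D(\ZHat)$. Concretely, for $D\in\{S,T\}$ there is a labeling function $\tilde f_D$ on $\mathcal{Z}_1\times\mathcal{Z}_2$ — essentially the best response to $f_D$ through $\gHat$ — such that $\epsilon_D(f\circ\gHat)$ equals the error of $f$ against $\tilde f_D$ under $p_D(\ZHat)$. (Because $h$ and $f_D$ are binary, $|h(\x)-f_D(\x)| = \mathbb{I}(h(\x)\neq f_D(\x))$ as noted in the excerpt, so everything is a disagreement probability and the reduction is clean.) This step is mostly bookkeeping but is where Assumption~\ref{ass:CoS} implicitly enters: covariate shift in the $\gHat$-latent space is what makes a single $f$ meaningful on both domains and makes the joint-risk term $\lambda_{\mathcal{H}_{\gHat}}$ small (though the bound itself holds regardless — the assumption is what makes it \emph{useful}).

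\textbf{Step 2: apply the classical bound.} Now I would apply the standard result (Ben-David et al.): for any two distributions $\mu_S,\mu_T$ on a common space and any hypothesis class $\mathcal{H}$,
\begin{equation*}
    \epsilon_{\mu_T}(f)\ \le\ \epsilon_{\mu_S}(f) + d_{\mathcal{H}\Delta\mathcal{H}}(\mu_S,\mu_T) + \lambda,\qquad \lambda=\min_{f'\in\mathcal{H}}\big[\epsilon_{\mu_S}(f')+\epsilon_{\mu_T}(f')\big].
\end{equation*}
Instantiating with $\mu_S=\ps(\ZHat)$, $\mu_T=\pt(\ZHat)$, and $\mathcal{H}=\mathcal{F}$ acting on the latent space, and translating back through Step 1, gives exactly \eqref{eq:uda_bound}: the source term $\epsilon_S(f\circ\gHat)$, the divergence term $d_{\mathcal{F}\Delta\mathcal{F}}(\ps(\ZHat),\pt(\ZHat))$, and the joint-risk term $\lambda_{\mathcal{H}_{\gHat}}=\min_{f'\in\mathcal{F}}[\epsilon_S(f'\circ\gHat)+\epsilon_T(f'\circ\gHat)]$. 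One small point to verify is that $d_{\mathcal{F}\Delta\mathcal{F}}$ on the latent marginals coincides with the relevant divergence between the hypothesis classes $\mathcal{H}_{\gHat}\Delta\mathcal{H}_{\gHat}$ on the original input distributions — this is immediate since $(f_1\circ\gHat)\oplus(f_2\circ\gHat)=(f_1\oplus f_2)\circ\gHat$, so the symmetric-difference class composed with $\gHat$ is exactly $(\mathcal{F}\Delta\mathcal{F})\circ\gHat$.

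\textbf{Main obstacle.} There is no deep obstacle here — the theorem is essentially a clean invocation of a known bound — so the real work is entirely notational: being careful that $\gHat$ is applied \emph{identically} on source and target (it only ever reads $X_1$, which is observed on both, so this is fine), and that the pushforward marginals $\ps(\ZHat),\pt(\ZHat)$ and the induced latent labeling functions are well-defined. The one genuinely substantive remark to flag for the reader is \emph{why} this is the right thing to bound: $\gHat$ never touches the missing $X_2$, so the bound holds without any missing-data assumption, and it is Assumptions~\ref{ass:cond_hyp}–\ref{ass:CoS}, together with the training objective of Section~\ref{sec:training} (which explicitly drives down $\epsilon_S(f\circ\gHat)$ and $d_{\mathcal{F}\Delta\mathcal{F}}(\ps(\ZHat),\pt(\ZHat))$), that make all three terms on the right-hand side controllable in practice.
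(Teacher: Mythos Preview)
Your proposal is correct and takes essentially the same approach as the paper: the paper's proof is a one-line invocation of the Ben-David et al.\ bound in the latent space $\mathcal{Z}$ induced by $\gHat$. Your write-up is in fact more careful than the paper's, explicitly verifying the pushforward reduction and the identity $(f_1\circ\gHat)\oplus(f_2\circ\gHat)=(f_1\oplus f_2)\circ\gHat$ that matches $d_{\mathcal{F}\Delta\mathcal{F}}$ on the latent marginals with the corresponding divergence on input space.
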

The upper bound in \eqref{eq:uda_bound} consists of $\epsilon_S(f\circ \gHat)$ assessing the discriminative information of source latent components and $d_{\mathcal{F} \Delta \mathcal{F}}(\ps(\ZHat), \pt(\ZHat))+\lambda_{\mathcal{H}_{\gHat}}$, assessing the transfer to the target. Our model minimizes this upper bound \ref{term_DA}; $L_3$ in  \eqref{eq:classif_l3} corresponds to the first term while $L_1$ in  \eqref{eq:adap_l1} to the second. Assumption \ref{ass:CoS} allows us to consider the third term as small. Adaptation affects both components $(\Z_1, \ZHat_2)$ as the missing component is imputed with $r \circ g_1$, yet, imputation here is not supervised with fully observed components. 

\paragraph{Imputation bound}
%We would like to assess if the discriminative information from the missing target component is recovered with imputation in our model. 
Given $f \in \mathcal{F}$, we compare under our assumptions $\epsilon_T(f\circ \gHat)$ and the ideal target error with full data, $\epsilon_T(f\circ g)$, with $g=(g_1, g_2)$ and $\gHat=(g_1, r\circ g_1)$. This allows us to measure the loss in performance due to missingness when using $f\circ \gHat$ instead of $f\circ g$. $g_1$ is shared in $g$ and $\gHat$ while $r\circ g_1$ reconstructs the missing component on both domains. We first derive Lemma \ref{lemma:lemma_s_bound} used in our upper bound in Proposition \ref{prop:imputation_bound}. %and to indirectly control the target error of $f\circ g$ with $f\circ \gHat$
\begin{lemma}[Proof in Appendix  \ref{sec:proof}]
    For any continuous density distributions $p$, $q$ defined on an input space $\mathcal{X}$, such that $\forall \x \in \mathcal{X}, q(\x) > 0$, the inequality $\sup_{\x \in \mathcal{X}}[p(\x)/q(\x)] \geq 1$ holds. Moreover, the minimum is reached when $p=q$.
    \label{lemma:lemma_s_bound}
\end{lemma}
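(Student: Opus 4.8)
The plan is to establish the inequality by a short contradiction argument and then verify the equality case by direct substitution. First I would note that since $p$ and $q$ are continuous and $q(\x)>0$ for every $\x\in\mathcal{X}$, the ratio $\x\mapsto p(\x)/q(\x)$ is a well-defined, continuous, nonnegative function, so $s \triangleq \sup_{\x\in\mathcal{X}} p(\x)/q(\x)$ exists in $(0,+\infty]$. If $s=+\infty$ the inequality $s\ge 1$ is immediate, so it remains to treat the case $s<+\infty$.

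Next I would argue by contradiction: suppose $s<1$. By definition of the supremum, $p(\x)\le s\,q(\x)$ for every $\x\in\mathcal{X}$. Integrating this pointwise inequality over $\mathcal{X}$ and using that $p$ and $q$ are probability densities gives $1=\int_{\mathcal{X}}p(\x)\,d\x \le s\int_{\mathcal{X}}q(\x)\,d\x = s < 1$, a contradiction. Hence $s\ge 1$, which proves the claimed bound for every admissible pair $(p,q)$.

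Finally, for the last assertion I would simply substitute $p=q$: then $p(\x)/q(\x)=1$ for all $\x$, so $s=1$. Since we have just shown $s\ge 1$ for all admissible pairs, this value $1$ is indeed the minimum of $\sup_{\x\in\mathcal{X}}p(\x)/q(\x)$, and it is attained at $p=q$.

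I do not expect any real obstacle here; the only points that warrant a word of care are that the ratio is everywhere defined (guaranteed by the hypothesis $q(\x)>0$) and the harmless degenerate case $s=+\infty$, after which the result follows from a single integration step.
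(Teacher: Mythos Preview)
Your proposal is correct and follows essentially the same approach as the paper: assume the supremum is strictly below $1$, derive the pointwise inequality $p(\x)<q(\x)$ (you use the slightly sharper $p(\x)\le s\,q(\x)$ with $s<1$), integrate to contradict $\int p=\int q=1$, and verify the equality case by plugging in $p=q$. If anything, your write-up is a touch more careful than the paper's, since you explicitly dispose of the case $s=+\infty$ and make the integration step quantitative.
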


We derive Proposition \ref{prop:imputation_bound}. Under Assumption \ref{ass:cond_hyp}, given a classifier $f \in \mathcal{F}$ and encoders $g, \gHat$, this proposition upper bounds $\epsilon_T(f\circ g)$ with $\epsilon_T(f\circ \gHat)$ multiplied by a factor \ref{term_It} in \eqref{eq:imputation_bound}. Our model minimizes both the Adaptation upper bound and the term \ref{term_It}. %Note that we cannot apply $f \circ g$ to target samples, but only $f \circ \gHat$. 

\begin{proposition}[Proof in Appendix  \ref{sec:proof}]
    Under Assumption \ref{ass:cond_hyp}, let $f \in \mathcal{F}, \gHat$ \eqref{eq:gHat} and $g$ \eqref{eq:g},
    \begin{equation}
        \epsilon_T(f\circ g) \leq \underbrace{\underbrace{\sup_{\z\sim p(\Z)}[\dfrac{p_S(\Z_2=\z_2|\z_1)}{p_S(\ZHat_2=\z_2|\z_1)}]}_{\mathrm{Imputation~error~on~ S~} \mytag{(IS)}{term_Is}} \times \underbrace{\sup_{\z\sim p(\Z)}[\dfrac{p_S(\ZHat_2=\z_2|\z_1)}{p_T(\ZHat_2=\z_2|\z_1)}]}_{\mathrm{Transfer~error~of~Imputation~} \mytag{(TI)}{term_Ti}}}_{\mathrm{Imputation~error~on~T~} \mytag{(IT)}{term_It}} \times \epsilon_T(f\circ \gHat)
        \label{eq:imputation_bound}
    \end{equation}
    Under Lemma \ref{lemma:lemma_s_bound}, \ref{term_It}=1 is the minimal value reached when $p_S(\Z_2|\Z_1)=p_S(\ZHat_2|\Z_1)$ and $p_S(\ZHat_2|\Z_1)=p_T(\ZHat_2|\Z_1)$. In this case, $\epsilon_T(f\circ g) = \epsilon_T(f\circ \gHat)$.
    \label{prop:imputation_bound}
\end{proposition}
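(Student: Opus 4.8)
The plan is to reduce both sides to the mass of a single disagreement region under two pushforward latent distributions, to relate these distributions by a pointwise likelihood‑ratio bound, and to identify that bound with the factor \ref{term_It} using Assumption~\ref{ass:cond_hyp}. First I would reformulate the two target errors in the latent space. Since $h_g=f\circ g$ and $h_{\gHat}=f\circ\gHat$ are the same classifier $f$ precomposed with an encoder, write $E\triangleq\{\z\in\mathcal{Z}:\ f(\z)\neq f_T(\z)\}$ for the region on which $f$ disagrees with the target labeling read in the latent space; then $\epsilon_T(f\circ g)=\int_{\mathcal{Z}}\mathbb{I}(\z\in E)\,p_T(\Z=\z)\,d\z$ and $\epsilon_T(f\circ\gHat)=\int_{\mathcal{Z}}\mathbb{I}(\z\in E)\,p_T(\ZHat=\z)\,d\z$, where $p_T(\Z)$ and $p_T(\ZHat)$ are the laws of $g(X)$ and $\gHat(X)$ under $p_T(X)$. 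Because $g$ and $\gHat$ share the first encoder $g_1$, both laws have the same $\Z_1$-marginal and factor as $p_T(\Z_1=\z_1)\,p_T(\Z_2=\z_2|\z_1)$ and $p_T(\Z_1=\z_1)\,p_T(\ZHat_2=\z_2|\z_1)$ respectively.

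Next, let $c\triangleq\sup_{\z}\ p_T(\Z_2=\z_2|\z_1)\,/\,p_T(\ZHat_2=\z_2|\z_1)$ (the denominators being positive by the continuity and support hypotheses of Lemma~\ref{lemma:lemma_s_bound}). The factorizations give $p_T(\Z=\z)\le c\,p_T(\ZHat=\z)$ for every $\z$, and integrating $\mathbb{I}(\cdot\in E)$ against both sides yields $\epsilon_T(f\circ g)\le c\,\epsilon_T(f\circ\gHat)$. To replace $c$ by \ref{term_It}, use Assumption~\ref{ass:cond_hyp}, i.e. $p_S(\Z_2|\Z_1)=p_T(\Z_2|\Z_1)$, to write for each $\z$
\[
\frac{p_T(\Z_2=\z_2|\z_1)}{p_T(\ZHat_2=\z_2|\z_1)}=\frac{p_S(\Z_2=\z_2|\z_1)}{p_S(\ZHat_2=\z_2|\z_1)}\cdot\frac{p_S(\ZHat_2=\z_2|\z_1)}{p_T(\ZHat_2=\z_2|\z_1)},
\]
whose two factors are at most \ref{term_Is} and \ref{term_Ti} respectively; taking the supremum over $\z$ gives $c\le\text{\ref{term_It}}$, hence \eqref{eq:imputation_bound}.

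For the equality case, apply Lemma~\ref{lemma:lemma_s_bound} (conditionally on $\z_1$, then taking the supremum over $\z_1$) with $(p,q)=(p_S(\Z_2|\z_1),p_S(\ZHat_2|\z_1))$ and with $(p,q)=(p_S(\ZHat_2|\z_1),p_T(\ZHat_2|\z_1))$: both \ref{term_Is} and \ref{term_Ti} are $\ge 1$, so \ref{term_It} $\ge 1$, with equality precisely when $p_S(\Z_2|\Z_1)=p_S(\ZHat_2|\Z_1)$ and $p_S(\ZHat_2|\Z_1)=p_T(\ZHat_2|\Z_1)$. Chaining these two equalities with Assumption~\ref{ass:cond_hyp} yields $p_T(\Z_2|\Z_1)=p_T(\ZHat_2|\Z_1)$, hence $p_T(\Z)=p_T(\ZHat)$ as distributions; the two $E$-integrals then coincide, i.e. $\epsilon_T(f\circ g)=\epsilon_T(f\circ\gHat)$.

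The main obstacle is the first step: making rigorous the claim that both target errors are controlled by the \emph{same} disagreement set $E\subseteq\mathcal{Z}$, i.e. that the target labeling descends consistently to the latent space and is compatible with both $g$ and $\gHat$. A secondary, more technical point is ensuring $p_S(\ZHat_2|\Z_1)$ and $p_T(\ZHat_2|\Z_1)$ are genuine densities whose support contains that of $p_T(\Z_2|\Z_1)$, so that $c<\infty$ and Lemma~\ref{lemma:lemma_s_bound} applies — which is consistent with $r$ acting as a (stochastic) conditional generator rather than a purely deterministic map.
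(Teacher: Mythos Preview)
Your proposal is correct and follows essentially the same route as the paper: express both target errors as integrals of the same latent disagreement indicator against the pushforward laws $p_T(\Z)$ and $p_T(\ZHat)$, bound their ratio pointwise by $\sup_{\z}p_T(\Z_2=\z_2|\z_1)/p_T(\ZHat_2=\z_2|\z_1)$, and then use Assumption~\ref{ass:cond_hyp} to decompose this ratio into \ref{term_Is}$\times$\ref{term_Ti}. Your version is in fact slightly more careful than the paper's at one point: you write $c\le\text{\ref{term_It}}$, whereas the paper asserts the equality $\sup_{\z}[S_T(\z)]=\sup_{\z}[S_S(\z)]\times\sup_{\z}[\cdot]$, which need not hold as an equality (only the inequality is needed, and it goes the right way). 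Your identification of the main obstacle---that a single latent labeling $f_T^{z}$ must be well-defined and compatible with both $g$ and $\gHat$---is exactly the implicit assumption the paper makes when it introduces $f_T^{z}$ without further comment.
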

%It is valid for any $f, \gHat, g$; in practise, by initializing $g_2$ as $g_2^\star=$arg$\min_{g_2}\epsilon_S(f \circ g)$, we guide $Z_2$ to contain relevant source classification information which helps minimize $\epsilon_S(f \circ \gHat)$ thus $\epsilon_T(f \circ \gHat)$ per \eqref{eq:uda_bound}.
The upper bound in \eqref{eq:imputation_bound} shows that for any $f, \gHat, g$, $\epsilon_T(f\circ g)$ is upper bounded by $\epsilon_T(f\circ \gHat)$ times the multiplicative factor \ref{term_It}. The optimal situation, equality, is obtained when \ref{term_It} equals 1. \ref{term_It} measures how imputation recovers the missing target component and is decomposed into two terms. \ref{term_Is} quantifies how imputation learns $p_S(\Z_2|\Z_1)$ with $p_S(\ZHat_2|\Z_1)$ i.e. reconstructs the component $Z_2=g_2(X_2)$ with $\ZHat_2=r(Z_1)$ and $Z_1=g_1(X_1)$ on the source. \ref{term_Ti} measures the divergence of $\ZHat_2|\Z_1$ across domains; the lower, the better indirect imputation supervision from $S$ transfers to $T$. The equality case occurs when \ref{term_It} is minimal, i.e when $p_S(\Z_2|\Z_1)=p_S(\ZHat_2|\Z_1)$ and $p_S(\ZHat_2|\Z_1)=p_T(\ZHat_2|\Z_1)$. Our model minimizes \ref{term_It} after first initializing $f, g$ with arg$\min_{f,g}\epsilon_S(f\circ g)$ replacing $\gHat$ with $g$ in $L_3$, \eqref{eq:classif_l3} to extract discriminative components $(\zsOne, \zsTwo)$. It minimizes \ref{term_Is} with $L_2$ in \eqref{eq:imput_l2} while \ref{term_Ti} is minimized with the adaptation loss $L_1$ in \eqref{eq:adap_l1}. Note that \ref{term_It} is minimal when $L_1=L_2=0$ yielding to the  equality of $\epsilon_T(f\circ g)$ and $\epsilon_T(f\circ \gHat)$. %In practise, we are interested in cases where $Z$ contains useful information on both components for classification, this is why $g$ is additionally constrained on the source domain to be equal to $\text{arg}\min_{g} \epsilon_S(f \circ g)$ for fixed $f$ as there is label information for the missing component on $S$. 

\subsection{Self-training refinement $\mathcal{R}$} 
%\alain{Je ne suis pas certain que cette partie soit une "theoretical insight" mais je n'ai pas de suggestions :) }
We now introduce a heuristic based on pseudo-labels useful for settings where Assumption \ref{ass:CoS} is not verified because $p_S(Y|\ZHat) \neq p_T(Y|\ZHat)$. Assumption \ref{ass:CoS} allows to consider $\lambda_{\mathcal{H}_{\gHat}}$ in \eqref{eq:uda_bound} as small. Indeed, several authors e.g. \cite{Zhao2019,Johansson2019} recently demonstrated that minimizing the first two terms in \ref{term_DA} \eqref{eq:uda_bound} is not sufficient for successful UDA. They show that (1) even when covariate shift is true in the data space, it usually does not hold in the latent space; (2) even when the first two terms in \ref{term_DA} in \eqref{eq:uda_bound} are minimized, the third, $\lambda_{\mathcal{H}_{\gHat}}$, might increase so that the bound is not minimized. \cite{Zhao2019} shows that in addition to the above conditions, one should enforce the posterior class distributions $p_D(Y|X)$ to be close on the two domains. Since $T$ is unlabeled there is no direct way to do that. We instead propose a simple heuristic using pseudo-labels and show how they can be incorporated with a simple adaptation of \eqref{eq:uda_bound}. Pseudo-labels are tentative labels assigned to target unlabelled samples by a classifier, denoted $h_{\gHat}$ below. %In practise this classifier is derived from the current hypothesis, trained on the labeled source data.
As $\lambda_{\mathcal{H}_{\gHat}}$ cannot be measured without target labels, we will approximately evaluate and minimize it with pseudo-labels. %This is particularly true in our case as we lack full information on target inputs and imputation introduces additional uncertainties.

\begin{proposition}[Proof in Appendix  \ref{sec:proof}]
    Assume a joint distribution $p_{\widetilde{T}}(X,Y)$ where $p_{\widetilde{T}}(X) = p_T(X)$ and $Y = h_\gHat(X)$ where $h_\gHat = f \circ \gHat \in \mathcal{H}_\gHat$ is a candidate hypothesis. Then,
    \begin{equation}
        \lambda_{\mathcal{H}_{\gHat}} \leq \min_{h_\gHat \in \mathcal{H}_\gHat} \left[\epsilon_S(h_\gHat) + \epsilon_{\widetilde{T}}(h_\gHat) + \epsilon_{T}(f_{\widetilde{T}})\right]
        \label{eq:extended_uda_bound}
    \end{equation}
    with $\epsilon_{T}(f_{\widetilde{T}})=\operatorname{Pr}_{\x \sim p_T(X)}(f_{\widetilde{T}}(\x) \neq f_T(\x))$ the $T$ error of the pseudo-labelling function $f_{\widetilde{T}}$.
    \label{prop:self_training}
\end{proposition}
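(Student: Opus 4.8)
The plan is to derive \eqref{eq:extended_uda_bound} from the ordinary triangle inequality for the $0$--$1$ error, following the same Ben-David-style reasoning that underlies Theorem~\ref{theorem:adaptation_bound} but substituting the (known) pseudo-labelling function $f_{\widetilde{T}}$ for the unknown target labelling function $f_T$. First I would unfold the definition recalled in Theorem~\ref{theorem:adaptation_bound}: since $\mathcal{H}_{\gHat}=\{f\circ\gHat:f\in\mathcal{F}\}$, one has $\lambda_{\mathcal{H}_{\gHat}}=\min_{h_\gHat\in\mathcal{H}_{\gHat}}[\epsilon_S(h_\gHat)+\epsilon_T(h_\gHat)]$, so it suffices to bound the unobservable summand $\epsilon_T(h_\gHat)=\epsilon_T(h_\gHat,f_T)$ for a fixed but arbitrary $h_\gHat\in\mathcal{H}_{\gHat}$.

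The key step is to insert $f_{\widetilde{T}}$ as an intermediate labelling function. Because $h_\gHat$, $f_{\widetilde{T}}$ and $f_T$ are all $\{0,1\}$-valued in our binary setting, the pointwise inclusion of events $\{h_\gHat(\x)\neq f_T(\x)\}\subseteq\{h_\gHat(\x)\neq f_{\widetilde{T}}(\x)\}\cup\{f_{\widetilde{T}}(\x)\neq f_T(\x)\}$ holds; taking probabilities under $p_T(X)$ and applying the union bound (equivalently, the triangle inequality for $\mathbb{E}_{\x\sim p_T(X)}|\cdot-\cdot|$) gives $\epsilon_T(h_\gHat,f_T)\le\epsilon_T(h_\gHat,f_{\widetilde{T}})+\epsilon_T(f_{\widetilde{T}},f_T)$. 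Then I would match the two right-hand terms with the quantities in the statement: since $p_{\widetilde{T}}(X)=p_T(X)$ and $f_{\widetilde{T}}$ is the deterministic labelling function of $\widetilde{T}$, we have $\epsilon_T(h_\gHat,f_{\widetilde{T}})=\mathbb{E}_{\x\sim p_{\widetilde{T}}(X)}[\mathbb{I}(h_\gHat(\x)\neq f_{\widetilde{T}}(\x))]=\epsilon_{\widetilde{T}}(h_\gHat)$, and by definition $\epsilon_T(f_{\widetilde{T}},f_T)=\operatorname{Pr}_{\x\sim p_T(X)}(f_{\widetilde{T}}(\x)\neq f_T(\x))=\epsilon_T(f_{\widetilde{T}})$.

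To finish, I would add $\epsilon_S(h_\gHat)$ to both sides, obtaining for every $h_\gHat\in\mathcal{H}_{\gHat}$ the pointwise inequality $\epsilon_S(h_\gHat)+\epsilon_T(h_\gHat)\le\epsilon_S(h_\gHat)+\epsilon_{\widetilde{T}}(h_\gHat)+\epsilon_T(f_{\widetilde{T}})$, and then minimise over $h_\gHat$ on both sides. Evaluating the left-hand side at a minimiser of the right-hand side yields $\lambda_{\mathcal{H}_{\gHat}}=\min_{h_\gHat}[\epsilon_S(h_\gHat)+\epsilon_T(h_\gHat)]\le\min_{h_\gHat}[\epsilon_S(h_\gHat)+\epsilon_{\widetilde{T}}(h_\gHat)+\epsilon_T(f_{\widetilde{T}})]$, where the pseudo-label error term passes unchanged through the minimum because it does not depend on $h_\gHat$; this is exactly \eqref{eq:extended_uda_bound}.

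I expect no genuinely hard step here — the whole argument is bookkeeping around a single triangle inequality — so the ``main obstacle'' is really just keeping two points straight: (i) the triangle inequality for $\epsilon_T(\cdot,\cdot)$ is legitimate precisely because all three functions are $\{0,1\}$-valued, making $\epsilon_T$ a pseudo-metric on hypotheses (a subtlety one would have to revisit for a genuine multi-class surrogate loss); and (ii) the identification $\epsilon_T(h_\gHat,f_{\widetilde{T}})=\epsilon_{\widetilde{T}}(h_\gHat)$, which uses precisely the two defining properties of $\widetilde{T}$ assumed in the proposition, namely $p_{\widetilde{T}}(X)=p_T(X)$ and that $f_{\widetilde{T}}$ is its labelling function. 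It is also worth noting that when the pseudo-labeller coincides with the candidate, $f_{\widetilde{T}}=h_\gHat$, the term $\epsilon_{\widetilde{T}}(h_\gHat)$ vanishes and \eqref{eq:extended_uda_bound} holds with equality; the content of the bound is that it splits $\lambda_{\mathcal{H}_{\gHat}}$ into an empirically minimisable part, $\epsilon_S(h_\gHat)+\epsilon_{\widetilde{T}}(h_\gHat)$ (source error plus pseudo-label consistency on the target), and the pseudo-label quality term $\epsilon_T(f_{\widetilde{T}})$, which cannot be measured but is expected to be small.
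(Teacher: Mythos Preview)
Your proposal is correct and follows essentially the same route as the paper's proof: both bound $\epsilon_T(h_{\gHat})$ by the triangle inequality for the $0$--$1$ error with $f_{\widetilde T}$ as intermediate hypothesis, then use $p_{\widetilde T}(X)=p_T(X)$ to rewrite the resulting terms as $\epsilon_{\widetilde T}(h_{\gHat})$ and $\epsilon_T(f_{\widetilde T})$. The only cosmetic difference is that the paper first passes to the $\widetilde T$ marginal before applying the triangle inequality, whereas you apply it under $p_T$ and then identify terms; your explicit handling of the final minimisation step is if anything slightly more complete than the paper's.
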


The first two terms on the right hand side of \eqref{eq:extended_uda_bound} may be controlled as we know source labels and target pseudo-labels; the third term is the error of the pseudo-labeling function, minimal if pseudo-labels are equal to true target labels. We cannot measure the last term but propose self-training as a way to heuristically improve the pseudo-labeling function. 

We detail one way to do so in Algorithm \ref{alg:refinement-pseudo-code}. We start from an initial set of pseudo-labels, e.g. the pseudo-labels provided by the model in Section \ref{sec:training} and then refine them. Many self-training methods have been proposed. We use a combination of two such methods, initially proposed for semi-supervised learning: an adaptation of the semi-supervised discriminant Classification Expectation Maximization (CEM) in \cite{Amini2005} and semi-supervised learning by entropy minimization \cite{Grandvalet2004}. We found that combining these two approaches performed better than each method used alone.

In the following we assume to have a set $\mathcal{S}, \mathcal{T}$ respectively of labelled $S$ and unlabelled $T$ samples. \cite{Amini2005} introduce an iterative method which starts from pseudo-labels provided by an initial classifier and retrains the classifier with these labels. We start with $f(\xT)$ trained as in Section \ref{sec:training} and keep, at each iteration, all samples in $\mathcal{T}$ whose classification score is above a threshold, this set of pseudo-labelled instances is denoted $\mathcal{T}^{pl}$. We then minimize a cross-entropy loss on $\mathcal{S} \cup \mathcal{T}^{pl}$, between the labels for $\mathcal{S}$ or pseudo-labels for $\mathcal{T}^{pl}$ and the predicted scores. \cite{Grandvalet2004} optimizes an entropy loss on the distribution of the predicted class posteriors output from $f$ for all unlabelled samples; we apply this loss to $\mathcal{T} \setminus \mathcal{T}^{pl}$. This entropy loss can be considered as a soft version of the discriminant CEM loss. 

In conclusion, we first train the model without pseudo-labels minimizing $L$ (Section \ref{sec:training}). We then use the learned classifier to provide initial pseudo-labels and minimize jointly discriminant CEM and entropy loss to refine them. Given $h_\gHat=f\circ \gHat\in \mathcal{H}_\gHat$ a hypothesis with $\forall k \in [1, K], {h_\gHat}_k(\x)$ the probability of predicting instance $\x$ to class $k$, $L_{Disc}$ a cross-entropy loss and $\lambda$ a weight for entropy, the objective function of our refinement method is:
\begin{equation}
    L_{\mathcal{R}} = \underbrace{\sum_{(\x, y) \in \mathcal{S} \cup \mathcal{T}^{pl}} L_{Disc}(h_\gHat(\x), y)}_{\text{Discriminant~CEM}~ \mytag{(CEM)}{term_CEM}} + \lambda \underbrace{\sum_{\x \in \mathcal{T} \setminus \mathcal{T}^{pl}} \sum_{k=1}^{K} {h_\gHat}_k(\x) \log {h_\gHat}_k(\x)}_{\mathrm{Entropy~} \mytag{(E)}{term_E}}
    \label{eq:refinement_eq}
\end{equation}
The first term in \eqref{eq:refinement_eq}, \ref{term_CEM}, controls $\epsilon_S(h_\gHat) + \epsilon_{\widetilde{T}}(h_\gHat)$ while the second term, \ref{term_E}, heuristically controls $\epsilon_{T}(f_{\widetilde{T}})$ by encouraging separation between classes. We found that this heuristically brings pseudo-labels closer to the target labels on our datasets. In practise, we minimize $L_{\mathcal{R}}$ with respect to $f, \gHat$.

\begin{comment}
\begin{equation*}
    L_{\mathcal{R}} = \underbrace{\sum_{(\x, y) \in S \cup \mathcal{T}^{pl}} \log p(y|\x)}_{\mathrm{Imputation~} \mytag{(I)}{term_It}} + \lambda \underbrace{\sum_{\x \in T \setminus \mathcal{T}^{pl}} \sum_{k=1}^{K} p(k|\x) \log p(k|\x)}_{\mathrm{Entropy~} \mytag{(E)}{term_E}} \\
\end{equation*}
\end{comment}

\begin{algorithm}[h!]
    \caption{Self-training procedure for Adaptation-Imputation}
    \label{alg:refinement-pseudo-code}
        \textbf{Input} $\mathcal{S}=\{(\xspi, {y_S^{(i)}})\}_{i=1}^{N_S}$, $\mathcal{T}=\{(\xtpi\}_{i=1}^{N_T}$, Adaptation-Imputation method $\mathcal{A}$ in Section \ref{sec:training} \\
        \textbf{Output} Classifier $f$; Feature extractor $\gHat$ defined in \eqref{eq:gHat}
        \hspace*{\algorithmicindent} 
        \begin{algorithmic}[1]
            \item $f, \gHat = \mathcal{A}(\mathcal{S}, \mathcal{T})$ \Comment{Initialize $f, \gHat$ with \textit{Adaptation-Imputation} \eqref{eq:overall_loss}} 
            %\State $\pHtH(X,Y)=\mathcal{L}(f, \pHt)$ \Comment{Create pseudo-labelled set $\widetilde{T}$}
            \State $f, \gHat = \text{arg}\min_{f,\gHat} L_{\mathcal{R}}$ \Comment{Semi-supervised refinement of $f, \gHat$ by optimizing \eqref{eq:refinement_eq}}%\mathcal{R}(\pHs(X,Y), \pHtH(X,Y), \pHt \backslash \pHtH)$
        \end{algorithmic}
\end{algorithm}

\section{Experiments}
\label{sec:experiments}

\subsection{Datasets and experimental setting}

\paragraph{Datasets}
Experiments are performed on three types of datasets. The first one, $\texttt{digits}$, is a classical multi-class classification benchmark used in many UDA studies and adapted to fit our missing data setting. The second one, which initially motivated our framework, consists of advertising datasets where we aim at transferring knowledge from retargeting users with full browsing information to prospecting users with missing information. The task is binary classification as measured by Click-Through-Rate (CTR) or Conversion Rate (CR)\footnote{{\scriptsize CTR is the number of clicks made on ads divided by the number of shown ads. CR replaces clicks with purchases.}} given user browsing traces. We use two such datasets: \texttt{ads-kaggle} is a public kaggle dataset\footnote{{\scriptsize http://labs.criteo.com/2014/02/kaggle-display-advertising-challenge-dataset/}}, while \texttt{ads-real} was gathered internally. Both correspond to real advertising traffic. Finally, we performed tests on a text dataset, Amazon reviews, denoted \texttt{amazon}. The initial problem is transformed into binary classification and to a non-stochastic missing data problem. For both \texttt{digits} and \texttt{amazon}, a subset of the components are set to $0$ to mimic missing data while on \texttt{ads}, data is missing structurally (more details in Appendix \ref{sec:dataset_description}).
%On \texttt{ads-kaggle}, we set some additional features artificially as missing. Models are all evaluated on a separate test set. 

\paragraph{Baselines}
We report results for the following models: 

(a) \textit{Source-Full} trained without adaptation on $\xs$ and tested on full $\xt$; adaptation is added in \textit{Adaptation-Full}. Note that this model is only applicable for our academic benchmark where we have access to full data.

(b) \textit{Source-ZeroImputation} and \textit{Adaptation-ZeroImputation} do the same but considering full $\xs$ while $\xt$ is incomplete. Missing data $\xtTwo$ is set to $\textbf{0}$, $\xt=(\xtOne,\textbf{0})$.

(c) \textit{Source-IgnoreComponent} and \textit{Adaptation-IgnoreComponent} are a variant of the above where only $\xOne$ is considered while $\xTwo$ is ignored for both $S$ and $T$. 

(d) \textit{Adaptation-Imputation}, our model, considers full $\xs$ and $\xt=(\xtOne,\textbf{0})$ adding imputation with a conditional generative model. 

(e) We add self-training to \textit{Adaptation-Imputation} and when applicable to \textit{Adaptation-Full}.

Note that \textit{Adaptation-Full} is an upper bound of our imputation model since it uses full information while $\xtTwo$ is not available in practice. \textit{Adaptation-ZeroImputation} and \textit{Adaptation-IgnoreComponent} are lower bounds for our model since they only perform adaptation and do not impute non-zero values. 

\paragraph{Hyperparameters} Parameters are chosen using the DEV estimator \cite{You2019}. For \texttt{digits}, NN architectures are adapted from \cite{Ganin2015}; we use Adam optimizer with $lr=10^{-2}$ decayed; batch size of $128$ and $100$ epochs. For \texttt{ads} and \texttt{amazon}, three-layered NN with $128$ neurons per layer are used as feature extractors; the classifier and discriminators are single-layered with $128$ neurons; $lr=10^{-6}$ and is decayed; batch size is $500$ with $50$ epochs. Reported results are mean value and standard deviation over five runs and best results are indicated in $\textbf{bold}$. Further details are given in the Appendix \ref{sec:hyperparameter}.
%Due to space constraint we have not included all details but will describe in a technical report upon acceptance. 

\subsection{Digits}
\label{sec:result_digits}
\paragraph{Description}
We consider UDA problems between several datasets: MNIST \cite{MNIST}, USPS \cite{Hull:1994:DHT:628312.628607}, SVHN \cite{SVHN} and MNIST-M \cite{Ganin2015} as illustrated in Figure  \ref{fig:missing_digits} (a). MNIST $\rightarrow$ SVHN is not considered as it is difficult for traditional UDA \cite{Ganin2015}. All tasks are 10-class classification problems. From complete digits datasets, we build datasets with missing input values by setting corresponding pixel values to zero for horizontal patches of different sizes as illustrated on Figure  \ref{fig:missing_digits} (b) for MNIST-M digits. It is clear that there is domain shift on these datasets as the pixel values have different mean and variance across domains.

\begin{figure}[h!]
    \centering
    \includegraphics[width=0.8\textwidth]{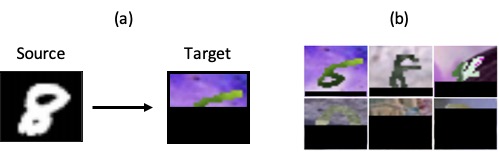}
    \caption{(a) MNIST$\rightarrow$MNIST-M adaptation; (b) Digits with missing horizontal patches of increasing size}
    \label{fig:missing_digits}
\end{figure}
\paragraph{Results with half of the digit missing}
We first removed half of each target digit, the horizontal bottom part. We report target accuracy in Table \ref{table:imputation_results} for both \texttt{ADV} and \texttt{OT} models. Removing half of the digit leads to a strong performance decrease for \textit{Source-IgnoreComponent} and \textit{Source-ZeroImputation} compared to the upper-bounds of \textit{Source-Full}; the performance is partially recovered with adaptation. \textit{Adaptation-Imputation} clearly improves on \textit{Adaptation-IgnoreComponent} and \textit{Adaptation-ZeroImputation} in all cases which validates the importance of imputation. %In Section \ref{sec:ablation_studies} we show that the combination of imputation and adaptation is required for reaching this level of performance. Imputation or adaptation alone are well behind the jointly trained instance of the model. 
However, it does not reach the  upper bound performance of \textit{Adaptation-Full}. % where the difference lies between 10 and 25 points. %Moreover, \textit{Adaptation-Imputation} beats the non-adapted $\textit{Source-Full}$ baseline on several datasets. 
Both \texttt{ADV} and \texttt{OT} versions exhibit the same behavior. In the results in Table \ref{table:imputation_results}, \texttt{ADV} performance is higher than \texttt{OT}. This is because performance is highly dependent on the NN architectures and we tuned our NNs for \texttt{ADV}. \texttt{OT} models may reach performance similar to \texttt{ADV} but require an order of magnitude more parameters. To keep the comparison fair, we use the same NN models for both \texttt{ADV} and \texttt{OT}. Imputation models achieve their highest performance when adaptation between domains is complex (MNIST $\rightarrow$ MNIST-M, SVHN $\rightarrow$ MNIST) illustrating the importance of imputation when transfer is difficult. We show in Appendix \ref{sec:appendix_embedding} the learned latent representations $\zsH, \ztH$ for various \texttt{digits} adaptation problems.%In all experiments, the performance of $\texttt{Partial}$ models are usually higher than the $\texttt{Missing}$ models, as setting missing components to zero tends to increase distance between distributions making the Adaptation-classification problem harder.

\paragraph{Varying missing patch size}
We analyze the impact of the size of the missing patch by removing a percentage $p \in \{30\%, 40\%, 50\%, 60\%,70\%\}$ of MNIST digits when adapting SVHN $\rightarrow$ MNIST, with the same hyperparameters. Mean values over five runs are reported in Figure \ref{fig:missing_patch} for \texttt{ADV} models. We notice that our model constantly beats the other baselines regardless of the missing patch size. The figure exhibits borderline cases when the size of the missing patch becomes very small ($<30\%$) or very large ($>65\%$). When the missing patch is small there is enough information for predicting the label thus simple models perform well; when it becomes big, there is not enough information for efficient reconstructions.

\begin{figure}[H]
    \centering
    \includegraphics[width=0.7\textwidth]{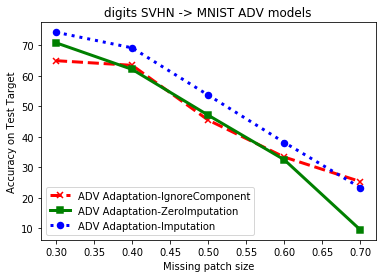}
    \caption{\texttt{ADV} target accuracy ($\uparrow$) on SVHN $\rightarrow$ MNIST with missing patch size}
    \label{fig:missing_patch}
\end{figure}%,trim={0 0 0 0.8cm },clip

\subsection{Ads}
\label{sec:ads_dataset}

\paragraph{Description}
The \texttt{ads} datasets are used for solving the binary classification problem of predicting if a \textbf{user} exposed to an ad from a \textbf{partner} (e.g. Booking.com) clicks given his browsing history. A row in this dataset is a vector $\x=(\xOne, \xTwo)$ specific to a (user-partner) pair where $\xOne$ gathers mean statistics for this user on all visited partners summarizing the user's display and click statistics and $\xTwo$ corresponds to the user-partner specific traces. The label is the response to an ad for this (user, partner) pair, a click for \texttt{ads-kaggle} or a purchase for \texttt{ads-real}. We transfer knowledge from the labelled source domain composed of all user-partner pairs for which the user has already interacted with the partner (\textbf{retargeting} users) to the unlabelled target domain composed of all the user-partner pairs for which the user has never interacted with this partner (\textbf{prospecting} users). $\xsTwo$ is known but $\xtTwo$ is \textbf{unknown}. There are several partners and users per domain. These datasets are large scale as seen in Table \ref{table:stats_criteo} (1M and 24M source displays respectively for \texttt{ads-kaggle}, \texttt{ads-real}) with some specificities: there is class imbalance and five times less data on the target than on the source. For both datasets besides missingness, there is also an adaptation problem: prospecting users tend to be less active and their statistics are usually different from those of retargeting users, with a higher overall activity (e.g. in terms of frequency of a partner's website visits); this translates into distribution shifts on $\xOne$ across domains. We visualize in the Appendix in Table \ref{table:hist_tab_kaggle} and Figure \ref{fig:hist_kaggle} the domain shift in \texttt{ads-kaggle} which comprises 13 features. Table \ref{table:hist_tab_kaggle} reports mean and standard deviation on each feature's value over a domain and Figure \ref{fig:hist_kaggle} plots the histogram of the distribution of each feature where the $y$-axis is unnormalized and corresponds to real counts. Feature 5 is naturally missing on $T$ and distributions are different in shape, mean and variance across domains. To show the benefit of modelling additional missing features, we artificially set features 1, 6, 7, 11 and 12 to zero on $T$ such that in total 6 features are missing while 7 are present. On \texttt{ads-real}, 12 features are missing while 17 are present and we observe the same domain shift trend; however missing features are naturally missing and we do not have access to their value. 

\paragraph{Results}
We report results in Table \ref{table:imputation_results} only for \texttt{ADV} models as we observed that the trend is similar for both \texttt{ADV} and \texttt{OT}. Missing features are structurally missing in the datasets, so we cannot report results for models using full inputs. The classes being imbalanced, accuracy is not relevant here so we report the log cross-entropy (CE) between the predicted values and the true labels. CE is considered to be the most reliable metric to estimate revenue for the ads problem and for large user bases small CE improvements can lead to a large revenue increase. For \texttt{ads-kaggle}, an improvement of 0.001 in CE is considered as significant \cite{DNC}. A first observation is that the imputation model is substantially better than the baselines on both datasets. For \texttt{ads-kaggle} it improves by $2.3\%$ the best adaptation model i.e. the adaptation model with zero imputation while for \texttt{ads-real} the improvement reaches $6.3\%$ over the second-best \textit{Source-IgnoreComponent}. A second observation is that for any model, adaptation consistently improves over the model without adaptation. The only exception is the setting ignoring the missing component in \texttt{ads-real}. A third observation is that there is a benefit of imputing the missing component for classification: source CE (not reported) shows that \textit{Source-ZeroImputation} which exploits $\xTwo$ is consistently higher than \textit{Source-IgnoreComponent} which does not, leading to relative gains of 5.6$\%$ on \texttt{ads-kaggle} and 8.2$\%$ on \texttt{ads-real}. The imputation model is able to generate and exploit this information.

\begin{table*}[ht]
    \centering
    \resizebox{\linewidth}{!}{
        {\begin{tabular}{ccccccccccc} %\hskip-1.5cm
            \hline
            %Metric & \multicolumn{8}{c}{target accuracy ($\uparrow$)} & \multicolumn{2}{c}{CE ($\downarrow$)} \\ \hline
            %\hline
            Dataset & \multicolumn{2}{c}{MNIST $\rightarrow$ USPS} & \multicolumn{2}{c}{USPS $\rightarrow$ MNIST} & \multicolumn{2}{c}{SVHN $\rightarrow$ MNIST} & \multicolumn{2}{c}{MNIST $\rightarrow$ MNIST-M} &
            \multicolumn{1}{c}{$\texttt{ads-kaggle}$} & \multicolumn{1}{c}{$\texttt{ads-real}$} \\ 
            \hline
            \hline
            
            Model w/o $\mathcal{R}$ & \texttt{ADV} & \texttt{OT} & \texttt{ADV} & \texttt{OT} & \texttt{ADV} & \texttt{OT} & \texttt{ADV} & \texttt{OT} & \multicolumn{1}{c}{\texttt{ADV}} & \multicolumn{1}{c}{\texttt{ADV}} \\ \hline
            \hline
            Source-Full & \multicolumn{2}{c}{71.5\rpm2.7} & \multicolumn{2}{c}{74.2\rpm2.7} & \multicolumn{2}{c}{58.1\rpm1.1} & \multicolumn{2}{c}{28.3\rpm1.4} & \multicolumn{2}{c}{NA} \\
            Adaptation-Full & 85.8\rpm3.2 & 92.6\rpm1.7 & 94.6\rpm2.1 & 93.9\rpm0.6 & 78.0\rpm3.4 & 76.1\rpm1.4 & 60.8\rpm3.8 & 46.9\rpm3.9 & \multicolumn{2}{c}{NA} \\
            \hline
            \hline
            Source-ZeroImputation & \multicolumn{2}{c}{25.7\rpm3.7} & \multicolumn{2}{c}{39.2\rpm2.6} & \multicolumn{2}{c}{31.5\rpm2.} & \multicolumn{2}{c}{14.4\rpm1.1} & 0.545\rpm0.019 & 0.663\rpm0.011 \\
            Adaptation-ZeroImputation & 48.4\rpm4.8 & 60.9\rpm6.3 & 67.5\rpm2.2 & 65.3\rpm5.2 & 47.1\rpm5.7 & 37.5\rpm6.2 & 34.7\rpm2.5 & 20.2\rpm2.5 & 0.397\rpm0.0057 & 0.660\rpm0.025 \\
            \hline
            Source-IgnoreComponent & \multicolumn{2}{c}{52.9\rpm9.7} & \multicolumn{2}{c}{54.3\rpm1.6} & \multicolumn{2}{c}{44.6\rpm1.9} & \multicolumn{2}{c}{19.1\rpm2.6} & 0.406\rpm0.00046 & 0.622\rpm0.0048 \\
            Adaptation-IgnoreComponent & 71.5\rpm3.2 & 64.0\rpm5.0 & 80.0\rpm1.4 & 72.0\rpm1.8 & 45.5\rpm1.9 & 47.9\rpm1.8 & 29.4\rpm1.6 & 26.8\rpm4.4 & 0.403\rpm0.0030 & 0.634\rpm0.0082 \\
            \hline
            Adaptation-Imputation & \textbf{74.2\rpm2.3} & \textbf{66.8\rpm1.3} & \textbf{81.4\rpm0.8} & \textbf{72.5\rpm2.7} & \textbf{53.8\rpm1.4} & \textbf{49.2\rpm1.5} & \textbf{57.9\rpm2.3} & \textbf{29.2\rpm1.4} & \textbf{0.389\rpm0.014} & \textbf{0.583\rpm0.013} \\
            \hline
        \end{tabular}}
    }
    \caption{Best target accuracy ($\uparrow$) on \texttt{digits} and CE ($\downarrow$) on \texttt{ads} without $\mathcal{R}$}
    \label{table:imputation_results}
\end{table*}

\subsection{Amazon reviews}

\paragraph{Description}
Besides dealing with images and interaction features in the \texttt{digits} and \texttt{ads} datasets, we also performed experiments on an additional modality, text. \texttt{amazon} is the Amazon product review dataset \cite{Blitzer2006} with four domains (Books, DVDs, Electronics, and Kitchen) transformed to binary classification with positives referring to reviews with rating above 3 stars and negatives to reviews with rating below 3 stars. Additional details on data processing can be found in Appendix \ref{sec:dataset_description}. We consider four adaptation problems and simulate missing features by setting the first half of the features to zero.
%This is an extreme setting which realistically reproduces the case where words from one domain (e.g. specific to books) are not present in the other domain (e.g. specific to electronics). 

\paragraph{Results}
Results are reported in Table \ref{table:amazon_results} and confirm our prior findings i.e. that jointly performing adaptation and imputation improves our baselines. We also notice that our model achieves similar performance to models using full data showing that imputation successfully recovered the missing component.

\begin{table*}[ht]
    \centering
    \scalebox{0.7}{
            \begin{tabular}{ccccc}
                \hline
                Dataset & \multicolumn{1}{c}{DVD $\rightarrow$ Electronics} & \multicolumn{1}{c}{Books $\rightarrow$ Kitchen} & \multicolumn{1}{c}{Kitchen $\rightarrow$ Electronics} & \multicolumn{1}{c}{DVD $\rightarrow$ Books} \\ 
                \hline
                \hline
                Source-Full & $69.57$ & $73.04$ & $77.88$ & $71.95$ \\
                Adaptation-Full & $73.62$ & $74.09$ & $79.63$ & $72.65$ \\
                \hline
                \hline
                Source-ZeroImputation & $58.51$ & $60.52$ & $66.27$ & $61.15$ \\
                Adaptation-ZeroImputation & $64.51$ & $61.08$ & $68.02$ & $62.80$ \\
                \hline
                Source-IgnoreComponent & $60.21$ & $62.03$ & $67.62$ & $64.35$ \\
                Adaptation-IgnoreComponent & $61.02$ & $64.08$ & $68.47$ & $66.00$ \\
                \hline
                Adaptation-Imputation & $\textbf{72.57}$ & $\textbf{72.69}$ & $\textbf{78.18}$ & $\textbf{72.61}$ \\
                \hline
            \end{tabular}
    }
    \caption{Best target accuracy ($\uparrow$) on \texttt{amazon} without $\mathcal{R}$}
    \label{table:amazon_results}
\end{table*}

\subsection{Refinement $\mathcal{R}$}
Results with pseudo-labels are reported in Table \ref{table:imputation_refinement} on \texttt{digits} and \texttt{ads-kaggle} for \textit{Adaptation-Full} and \textit{Adaptation-Imputation}. We set the threshold score selection for the discriminative CEM component to $95\%$ i.e. the pseudo labels of all target instances $\xt$ s.t. $\max_k {h_\gHat}_k(\xt) \geq 0.95$ are considered to be true and set the entropy weight to $\lambda=0.1$ on \texttt{digits} and $\lambda=1$ on \texttt{ads-kaggle}. Learning rates used for solving \eqref{eq:optim_pb} are divided by $10$ and $10$ epochs of successive refinement steps are applied. We observe a clear global improvement on both datasets showing that our refinement model is a good heuristic on real-world datasets for which we usually have $p_S(Y|\ZHat) \neq p_T(Y|\ZHat)$. For standard UDA methods such as \textit{Adaptation-Full}, performance is significantly improved everywhere with small change on MNIST $\rightarrow$ MNIST-M; \textit{Adaptation-Full} is not measurable for \texttt{ads-kaggle}. Our imputation with refinement model follows the same trend with a considerable relative gain of $+18.5\%$ on \texttt{ads-kaggle}.

\begin{table*}[ht]
    \centering
    \resizebox{\linewidth}{!}{
    {\begin{tabular}{cccccc}
        \hline
        \texttt{ADV} Model & MNIST $\rightarrow$ USPS & USPS $\rightarrow$ MNIST & SVHN $\rightarrow$ MNIST & MNIST $\rightarrow$ MNIST-M & $\texttt{ads-kaggle}$ \\
        \hline
        \hline
        Adaptation-Full w/ $\mathcal{R}$ & \textbf{95.9\rpm0.6 (+12\%)} & \textbf{96.8\rpm0.6 (+2.3\%)} & \textbf{83.3\rpm3.9 (+6.8\%)} & \textbf{60.9\rpm3.7 (+0.2\%)} & NA \\
        Adaptation-Imputation w/ $\mathcal{R}$ & \textbf{78.5\rpm1.6 (+5.8\%)} & \textbf{82.5\rpm0.5 (+1.4\%)} & \textbf{58.6\rpm1.8 (+8.9\%)} & \textbf{58.2\rpm2.3 (+0.5\%)} & \textbf{0.317\rpm0.0023 (+18.5\%)} \\
        \hline
    \end{tabular}}}
    \caption{$\mathcal{R}$ with relative gain over Table \ref{table:imputation_results}; target accuracy ($\uparrow$) on \texttt{digits} and CE ($\downarrow$) on \texttt{ads}}
    \label{table:imputation_refinement}
\end{table*}

\subsection{Ablation analysis}
\label{sec:ablation_studies}
We analyze the importance of each component of our model on the public datasets (\texttt{digits}, \texttt{amazon} and \texttt{ads-kaggle}) and report results in Table \ref{table:imputation_ablation} (bottom) and Figure \ref{fig:ablation_digits}.
\paragraph{Adaptation}
We measure the effect of adaptation term $L_1$ \eqref{eq:adap_l1} in $L$ in Table \ref{table:imputation_ablation} (first row). When removing adaptation, inference is performed as before by feeding $\ztH$ to $f$. This means that we only rely on the imputation and classification losses to learn the parameters of the model. For all datasets, adding $L_1$ considerably increases performance.
\paragraph{Imputation}
Imputation $\zsTwoH=h \circ g_1(\xsOne)$, combines adversarial training (\texttt{ADV}) and conditioning on the input datum via MSE (\texttt{MSE}) in $L_2$ \eqref{eq:imput_l2}.
\texttt{ADV} aligns the distributions of $\zsTwo$ and $\zsTwoH$ while \texttt{MSE} can be thought as performing regression. For a given $\xsOne$, there are possibly several potential $\xsTwo$ and thus $\zsTwo$. \texttt{ADV} allows us to focus on a specific mode of $\zsTwo$, while \texttt{MSE} will favour a mean value of the distribution. Results in Table \ref{table:imputation_ablation} (second row), show that for our datasets, combining \texttt{MSE} and \texttt{ADV} leads to improved results compared to using separately each loss. \texttt{MSE} alone already provides good performance, while using only \texttt{ADV} is clearly uncompetitive. Note that reconstruction is an ill-posed problem since the task is inherently ambiguous (different digits may be reconstructed from a half image). We performed tests with a stochastic input component to recover different modes, but the performance was broadly similar. We investigate in Figure \ref{fig:ablation_digits} several weighted combinations of \texttt{MSE} and \texttt{ADV}: for \texttt{digits} and \texttt{amazon}, equal weights were found to be a good choice, while for \texttt{ads-kaggle} performance is improved with other weightings. On Figure \ref{fig:ablation_digits}, \texttt{ADV} induces a high variance in the results (left part of $x$-axis) while \texttt{MSE} stabilizes the performance (right part of $x$-axis). \texttt{ADV} allows for better performance at the expense of high variance; a small contribution from \texttt{MSE}, $\lambda_{MSE} = 0.005$, stabilizes the results.

\begin{table*}[h!]
    \centering
    \resizebox{\linewidth}{!}{
    {\begin{tabular}{ccccccc}
        \hline
        Ablation study & \texttt{ADV} Model & MNIST $\rightarrow$ USPS & USPS $\rightarrow$ MNIST & SVHN $\rightarrow$ MNIST & MNIST $\rightarrow$ MNIST-M & $\texttt{ads-kaggle}$ \\
        \hline
        \hline
        \multirow{1}{*}{$L_{2}+L_{3}$ vs. $L_1 + L_{2}+L_{3}$} & $L=\lambda_2 L_{2}+ \lambda_3 L_{3}$ & 64.2\rpm1.8 (-13\%) & 51.3\rpm2.5 (-37\%) & 44.5\rpm1.4 (-17\%) & 24.1\rpm2.6 (-58\%) & 0.410\rpm0.0020 (-5.4\%) \\
        \hline
        \multirow{4}{*}{\texttt{ADV}-\texttt{MSE} weighting in $L_2$} & $L_2=L_{MSE}$ & 71.9\rpm3.7 (-3.1\%) & \textbf{81.4\rpm1.2 (0\%)} & 52.5\rpm3.7 (-2.4\%) & 56.5\rpm2.8 (-2.4\%) & 0.400\rpm0.0014 (-2.8\%)\\
        & $L_2=L_{ADV}$ & 28.6\rpm3.2 (-61\%) & 39.4\rpm5.2 (-52\%) & 28.8\rpm3.8 (-46\%) & 30.0\rpm3.7 (-48\%) & 0.469\rpm0.13 (-21\%)\\
        & $L_2=L_{ADV}+0.005 \times L_{MSE}$ & 47.8\rpm3.7 (-36\%) & 49.6\rpm5.8 (-39\%) & 46.0\rpm2.6 (-15\%) & 50.6\rpm2.2 (-13\%) & \textbf{0.389\rpm0.014 (0\%)}\\
        & $L_2=L_{ADV}+L_{MSE}$ & \textbf{74.2\rpm2.3 (0\%)} & \textbf{81.4\rpm0.8 (0\%)} & \textbf{53.8\rpm1.4 (0\%)} & \textbf{57.9\rpm2.3 (0\%)} & 0.401\rpm0.0014 (-3.1\%) \\
        \hline
        \hline
        Ablation study & \texttt{ADV} Model & \multicolumn{1}{c}{DVD $\rightarrow$ Electronics} & \multicolumn{1}{c}{Books $\rightarrow$ Kitchen} & \multicolumn{1}{c}{Kitchen $\rightarrow$ Electronics} & \multicolumn{1}{c}{DVD $\rightarrow$ Books} & \multicolumn{1}{c}{} \\ 
        \cline{1-6}
        \multirow{2}{*}{\texttt{ADV}-\texttt{MSE} weighting in $L_2$} & $L_2=L_{MSE}$ & 71.47 (-1.5\%) & 71.39 (-1.8\%) & 77.58 (-0.77\%) & 72.02 (-0.81\%) & \multicolumn{1}{c}{}\\
        & $L_2=L_{ADV}+L_{MSE}$ & $\textbf{72.57 (0\%)}$ & $\textbf{72.69 (0\%)}$ & $\textbf{78.18 (0\%)}$ & $\textbf{72.61 (0\%)}$ & \multicolumn{1}{c}{}\\
        \cline{1-6}
    \end{tabular}}}
    \caption{Ablation with relative gain over Table \ref{table:imputation_results}; accuracy ($\uparrow$) on \texttt{digits}, \texttt{amazon} and CE ($\downarrow$) on \texttt{ads}}
    \label{table:imputation_ablation}
\end{table*}

\begin{figure}[]
    \centering
    \includegraphics[width=0.7\textwidth]{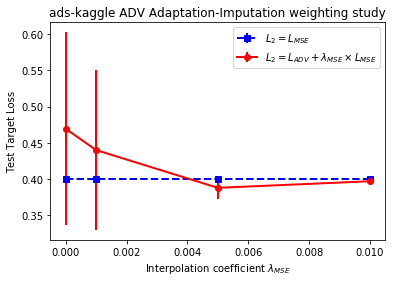}
    \caption{\textit{Adaptation-Imputation} target CE ($\downarrow$) with standard deviations on \texttt{ads-kaggle} w.r.t. $\lambda_{MSE}$}
    \label{fig:ablation_digits}
\end{figure}%,trim={0 0 0 0.8cm },clip

\subsection{Discussion}
\paragraph{Relationship between theoretical and experimental results} We comment on our experimental results in light of our adaptation \eqref{eq:uda_bound} and imputation \eqref{eq:imputation_bound} upper-bounds. Let us first consider \eqref{eq:uda_bound}. The first term in \eqref{eq:uda_bound}, $\epsilon_S(f\circ \gHat)$, is the classification loss $L_3$ in \eqref{eq:classif_l3}. The second term in \eqref{eq:uda_bound} $d_{\mathcal{F} \Delta \mathcal{F}}(\ps(\ZHat), \pt(\ZHat))$ is approximated by a  proxy $L_1$ \eqref{eq:adap_l1} and accounts for alignment. $L_1$ leads to substantial gains in Table \ref{table:imputation_ablation} (first row) when added to the loss. The third term $\lambda_{\mathcal{H}_{\gHat}}$ in \eqref{eq:uda_bound} is the optimal joint error heuristically controlled with self-training as justified by upper-bound \eqref{eq:extended_uda_bound}, with gains shown in Table \ref{table:imputation_refinement}. Second, we consider \eqref{eq:imputation_bound}. It is the product of two terms, the target imputation error $(I_T)$ and the error on the target $\epsilon_T(f\circ \gHat)$ which is exactly the left hand side term in bound \eqref{eq:uda_bound}. $(I_T)=(I_S)\times (T_I)$, $(I_S)$ is the source imputation error and is optimized when term $L_2$ \eqref{eq:imput_l2} is zero. $(T_I)$ is the transfer error, optimized when $L_1$ \eqref{eq:adap_l1} is zero. Adding $L_1$ to the loss improves the performance (Table \ref{table:imputation_ablation}). $L_2$ \eqref{eq:imput_l2} explains the gains of \textit{Adaptation-Imputation} over \textit{Adaptation-ZeroImputation} in Table \ref{table:imputation_results} as \textit{Adaptation-ZeroImputation} does not attempt to impute missing components. To summarize, minimizing our global error function $L$ in \eqref{eq:overall_loss} minimizes, according to the approximations just described, the two upper bounds in \eqref{eq:uda_bound} and \eqref{eq:imputation_bound}.

\paragraph{Limitations}
Our results are obtained under some assumptions which we are the first to introduce to our knowledge for our problem. First, if the missing and the observed components are statistically independent, Assumption \ref{ass:multi_modal} is not valid, and then there is no way to impute this missing data. Second, if $p_S(\Z_2|\Z_1) \neq p_T(\Z_2|\Z_1)$ i.e. Assumption \ref{ass:cond_hyp} is not valid, then we cannot transfer imputation from source to target. Yet, these assumptions are most often met in applications and allow to build a well-defined model with good empirical results. 

\section{Conclusion}
We proposed a new model for UDA with non-stochastic target missingness with indirect supervision from a complete source. This method uses only labelled source instances imputing the missing target values in a latent space. Under our assumptions, it minimizes an adaptation upper-bound of its target error and an imputation upper-bound of the ideal target error with full data and leads to important gains for two representative families of divergences (\texttt{OT}, \texttt{ADV}) on our benchmarks (digits, amazon) and on real-world advertising datasets, which are a complex task with missing features. We show that approaches using a pure regressive generator underperform compared to our approach on our real-world applications for which distributions are multi-modal. Finally, we introduced a heuristic refinement method based on self-training to deal with settings where posterior distributions mismatch. As follow-up, we plan to further investigate how to generate diverse outputs in our imputation network.% such as \cite{yang2018}.

\paragraph{Acknowledgements}
We would like to thank Keerthi Selvaraj for useful discussions. Alain Rakotomamonjy is funded by RAIMO ANR-20- CHIA-0021-01 and OATMIL ANR-17-CE23-0012 Projects of the French National Research Agency (ANR).

\bibliographystyle{splncs04}
\bibliography{imputation}   % name your BibTeX data base

\newpage
\appendix
\section{Additional related work}
\label{sec:related_work_app}

We present in this section some other secondary topics related to our problem in complement to Section \ref{sec:related_work}. 
\paragraph{Concept drift in data streams} Adapting to non i.i.d. data is also considered in evolving data streams where concept drift may occur \cite{Gama2014}. The hypotheses are different from the ones in our setting where adaptation is performed between static domains. %%Transfer learning is also visible in data streams which may change over time leading in a performance drop, an event called concept drift. We refer to \cite{Gama2014} for a survey on concept drift. Existing methods for handling concept drift e.g. \cite{Schlimmer1986, Domingos2000} detect or predict drift and adapt if needed. %\vspace{-0.5em}
\paragraph{Batch effect and multiple environments} Data may come from different environments with different distributions. Classical learning frameworks like ERM consider shuffled data without making the distinction between environments which may lead to erroneous conclusions. In biology, this is known as the batch effect \cite{Leek2010}. In ML, recent papers learn domain invariant representations from different environments \cite{Arjovsky2019}. This is different from the situation considered here where one explicitly adapts from a source to a target environment.
%Batch effects are sources of variation e.g. different processing times or different handlers, which may confound the discovery of real explanatory variables from data \cite{Leek2010}. There is a variety of batch effect-correction algorithms with various performance ranking evaluations \cite{Luo2010, Chen2011}. Correcting for batch effect is strongly tied to robust statistics XXX. Standard solution rely on relaxing the i.i.d. hypothesis by considering different environments \cite{Arjovsky2019} or clustering XXX. The similarity with our problem is to account for shifting distributions, yet our problem presents the specificity that data is not complete due to non-stochastic missingness.

\section{\texttt{OT} Adaptation-Imputation formulation}
\label{sec:ot_formulation}
We present here in more details our model using Optimal Transport (OT) as a divergence metric. The formulation is slightly different compared to $\texttt{ADV}$ models. We replace the $\mathcal{H}$-divergence approximation given by the discriminators $D_1$ and $D_2$ by the Wasserstein distance between source and target instances ($D_1$) and true and imputed feature representations ($D_2$), following the original ideas in \cite{shen2018, Damodaran2018}. 
In practice, we compute the Wasserstein distance using its primal form by finding a joint coupling matrix $\gamma$, using a linear programming approach \cite{peyre2019computational}. In \cite{Damodaran2018,jdot}, the OT problem is formulated on the joint $p(X, Y)$ distributions. Similarly to \cite{shen2018}, in our case, we focus on a plan that acts only on the feature space without taking care of the labels. This leads to:
\begin{dmath}
    L_{1} = \sum_{ij} \left(||\zsOnepi-\ztOnepj||^2 + ||{\zsTwoH}^{(i)}-{\ztTwoH}^{(j)}||^{2} \right) \Gone_{ij}
    \label{eq:djdot1}
\end{dmath}
where $\Gone_{ij}$ is the alignment value between source instance $i$ and target instance $j$.

For the imputation part, we keep the reconstruction MSE component in Equation \ref{eq:imput_mse_l2} and derive the distribution matching loss as:
\begin{dmath}
    L_{OT} = \sum_{ij} ||\zsTwopi - \zsTwoH^{(j)}||^2 \Gtwo_{ij}
    \label{eq:djdot_ot}
\end{dmath}
where $\Gtwo_{ij}$ is the alignment value between source instance $i$ and $j$. The final imputation loss is:
\begin{equation}
    L_2 = \lambda_{OT} \times L_{OT} + \lambda_{MSE} \times L_{MSE}
\label{eq:imput_ot}
\end{equation}

The classification term in Equation \ref{eq:classif_l3} is unchanged.

\medskip
The optimization problem in Equation \ref{eq:optim_pb} is solved in two stages following an alternate optimization strategy:
\begin{itemize}
    \item We fix all parameters but $\Gone$ and $\Gtwo$ and find the joint coupling matrices $\Gone$ and $\Gtwo$ using EMD $\min_{\Gone, \Gtwo} L$
    \item We fix $\Gone$ and $\Gtwo$ and solve $\min_{g_1, g_2, r, f} L$
\end{itemize}

In practice, we first minimize $L_3$ for a couple of epochs (taken to be 10 for \texttt{digits}) then minimize $\lambda_1 L_1 + \lambda_2 L_2 + \lambda_3 L_3$ in the remaining epochs. Learning rate and parameters are detailed further in Section \ref{sec:implementation_details}.

\section{Proofs}
\label{sec:proof}

\begin{theorem*}[\textbf{\ref{theorem:adaptation_bound}}]
   Given $f \in \mathcal{F}, \gHat$ in \eqref{eq:gHat} and $\ps(\ZHat)$, $\pt(\ZHat)$ the latent marginal distributions obtained with $g$.
    \begin{equation*}
        \begin{aligned}
            \epsilon_T(f\circ \gHat) \leq \underbrace{\left[\epsilon_S(f\circ \gHat)+d_{\mathcal{F} \Delta \mathcal{F}}(\ps(\ZHat), \pt(\ZHat))+\lambda_{\mathcal{H}_{\gHat}}\right]}_{\mathrm{Domain~Adaptation~} \ref{term_DA}}
        \end{aligned}
    \end{equation*}
    with $\epsilon_{S}(\cdot), \epsilon_{T}(\cdot)$ the expected error w.r.t to the labelling function $f_S, f_T$ on $S$, $T$ respectively; $\mathcal{F}\Delta\mathcal{F}$ the symmetric difference hypothesis space; $d_{\mathcal{H}}$ the $\mathcal{H}$-divergence for $\mathcal{H}=\mathcal{F}\Delta\mathcal{F}$ and \\$\lambda_{\mathcal{H}_{\gHat}}=\min_{f' \in \mathcal{F}}\left[\epsilon_{S}(f' \circ \gHat)+\epsilon_{T}(f'\circ \gHat)\right]$, the joint risk of the optimal hypothesis.
\end{theorem*}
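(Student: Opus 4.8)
The plan is to recognize the statement as the classical Ben-David et al. domain-adaptation bound applied to the composed hypothesis class $\mathcal{H}_{\gHat}=\{f\circ\gHat : f\in\mathcal{F}\}$ on the input space $\mathcal{X}$, and then to rewrite the resulting $\mathcal{H}\Delta\mathcal{H}$-divergence term in the latent space via a pushforward (data-processing) argument. I would first fix the candidate hypothesis $h=f\circ\gHat$ and introduce the ideal joint hypothesis $h^{*}=f^{*}\circ\gHat$, where $f^{*}$ is a minimizer of $\epsilon_S(f'\circ\gHat)+\epsilon_T(f'\circ\gHat)$ over $f'\in\mathcal{F}$, so that $\epsilon_S(h^{*})+\epsilon_T(h^{*})=\lambda_{\mathcal{H}_{\gHat}}$ by definition. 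The workhorse object is the pairwise disagreement $\epsilon_D(h,h')=\mathbb{E}_{\x\sim p_D(X)}[\mathbb{I}(h(\x)\neq h'(\x))]$, which for binary-valued hypotheses satisfies the triangle inequality $\epsilon_D(h_1,h_3)\le\epsilon_D(h_1,h_2)+\epsilon_D(h_2,h_3)$ and reduces to $\epsilon_D(h)$ when the second argument is the labeling function $f_D$; I would state at the outset the normalization convention for $d_{\mathcal{F}\Delta\mathcal{F}}$ (the usual factor relating the $\sup$ over symmetric-difference sets to the divergence) so that the constant bookkeeping is handled once.

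The core chain of inequalities would then be: (i) $\epsilon_T(h)=\epsilon_T(h,f_T)\le\epsilon_T(h^{*},f_T)+\epsilon_T(h,h^{*})=\epsilon_T(h^{*})+\epsilon_T(h,h^{*})$; (ii) swap $\epsilon_T(h,h^{*})$ for $\epsilon_S(h,h^{*})$ at the cost of the discrepancy $|\epsilon_S(h,h^{*})-\epsilon_T(h,h^{*})|$; (iii) bound $\epsilon_S(h,h^{*})\le\epsilon_S(h)+\epsilon_S(h^{*})$ by a second application of the triangle inequality; and (iv) control the discrepancy in (ii) uniformly: for binary hypotheses $|h_1(\x)-h_2(\x)|=\mathbb{I}\big((f_1\oplus f_2)(\gHat(\x))=1\big)$, and as $f_1,f_2$ range over $\mathcal{F}$ the map $f_1\oplus f_2$ ranges over $\mathcal{F}\Delta\mathcal{F}$, so $\sup_{h_1,h_2\in\mathcal{H}_{\gHat}}|\epsilon_S(h_1,h_2)-\epsilon_T(h_1,h_2)|$ is precisely the $\mathcal{H}_{\gHat}\Delta\mathcal{H}_{\gHat}$-divergence between the input marginals $\ps(X),\pt(X)$. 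Combining (i)–(iv) and substituting $\epsilon_S(h^{*})+\epsilon_T(h^{*})=\lambda_{\mathcal{H}_{\gHat}}$ yields $\epsilon_T(f\circ\gHat)\le\epsilon_S(f\circ\gHat)+d_{\mathcal{H}_{\gHat}\Delta\mathcal{H}_{\gHat}}(\ps(X),\pt(X))+\lambda_{\mathcal{H}_{\gHat}}$.

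The last, and in my view the only genuinely non-routine, step is to identify $d_{\mathcal{H}_{\gHat}\Delta\mathcal{H}_{\gHat}}(\ps(X),\pt(X))$ with the latent-space quantity $d_{\mathcal{F}\Delta\mathcal{F}}(\ps(\ZHat),\pt(\ZHat))$ appearing in the statement. Here I would observe that every $h\in\mathcal{H}_{\gHat}\Delta\mathcal{H}_{\gHat}$ factors as $h=g\circ\gHat$ with $g\in\mathcal{F}\Delta\mathcal{F}$, so the set $\{\x : h(\x)=1\}$ is the $\gHat$-preimage of $A_g=\{\z : g(\z)=1\}$; hence $\operatorname{Pr}_{\x\sim p_D(X)}(h(\x)=1)=\operatorname{Pr}_{\z\sim p_D(\ZHat)}(\z\in A_g)$ by the very definition of the pushforward marginal $p_D(\ZHat)$, and taking the supremum over $g\in\mathcal{F}\Delta\mathcal{F}$ (equivalently over $h\in\mathcal{H}_{\gHat}\Delta\mathcal{H}_{\gHat}$) gives the desired equality. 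Everything else — the two triangle inequalities and the reduction of the absolute disagreement to an indicator of the XOR — is standard and dispatched in a few lines. Note that Assumption \ref{ass:CoS} is not used in deriving the inequality itself; it enters only afterwards, to argue that the term $\lambda_{\mathcal{H}_{\gHat}}$ is small.
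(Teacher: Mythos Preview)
Your proposal is correct and follows the same approach as the paper, which simply states ``We apply \cite{Ben-David2010} to form the bound in $\mathcal{Z}$ using $\gHat$.'' You have unpacked in full detail (the triangle-inequality chain and the pushforward identification $d_{\mathcal{H}_{\gHat}\Delta\mathcal{H}_{\gHat}}(\ps(X),\pt(X))=d_{\mathcal{F}\Delta\mathcal{F}}(\ps(\ZHat),\pt(\ZHat))$) what the paper leaves as a one-line citation.
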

\begin{proof}
    We apply \cite{Ben-David2010} to form the bound in $\mathcal{Z}$ using $\gHat$. 
\end{proof}

\begin{lemma*}[\textbf{\ref{lemma:lemma_s_bound}}]
    For any continuous density distribution $p$, $q$ defined on an input space $\mathcal{X}$, such that $\forall \x \in \mathcal{X}, q(\x) > 0$, the inequality $\sup_{\x \in \mathcal{X}}[p(\x)/q(\x)] \geq 1$ holds. Moreover, the minimum is reached when $p=q$.
\end{lemma*}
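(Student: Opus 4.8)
The plan is to argue by contradiction, using the single structural fact available: both $p$ and $q$ are probability densities, so $\int_{\mathcal{X}} p(\x)\,d\x = \int_{\mathcal{X}} q(\x)\,d\x = 1$. Since $q(\x)>0$ everywhere, the ratio $p(\x)/q(\x)$ is well defined on all of $\mathcal{X}$. First I would suppose, for contradiction, that $\sup_{\x \in \mathcal{X}}[p(\x)/q(\x)] < 1$. By definition of the supremum this yields a constant $c<1$ with $p(\x) \le c\,q(\x)$ for every $\x \in \mathcal{X}$.

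Next I would integrate this pointwise inequality over $\mathcal{X}$. Because the integrands are nonnegative, monotonicity of the integral gives $1 = \int_{\mathcal{X}} p(\x)\,d\x \le c \int_{\mathcal{X}} q(\x)\,d\x = c < 1$, which is absurd. Hence $\sup_{\x \in \mathcal{X}}[p(\x)/q(\x)] \ge 1$, establishing the inequality. For the second claim I would simply note that when $p=q$ the ratio is identically $1$, so the supremum equals $1$; since the first part shows $1$ is a lower bound for this supremum over every admissible pair $(p,q)$, the value $1$ is the minimum and it is attained at $p=q$.

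There is no real obstacle: the only point requiring a word is the step from ``$\sup<1$'' to a uniform dominating constant $c<1$, which is immediate from the definition of supremum, and the exchange of the bound with the integral, which is valid by monotonicity of the Lebesgue integral for nonnegative functions. No regularity beyond $q>0$ and integrability of $p,q$ is used, so the argument is essentially complete as sketched; if one also wanted the converse (that $\sup = 1$ forces $p=q$) it would follow from $\int_{\mathcal{X}}(q(\x)-p(\x))\,d\x = 0$ with $q-p\ge 0$ together with continuity, but the statement as given does not require this.
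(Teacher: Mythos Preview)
Your proof is correct and follows essentially the same contradiction argument as the paper: assume the supremum is strictly below $1$, deduce a pointwise bound $p<q$ (you sharpen this to $p\le c\,q$ with $c<1$), integrate, and contradict $\int p=\int q=1$. Your version is in fact slightly cleaner, since introducing the uniform constant $c$ makes the passage from the pointwise strict inequality to the integral contradiction immediate without appealing to continuity.
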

\begin{proof}
    Suppose that $\not\exists \x \in \mathcal{X} s.t. \sup_{\x} p(\x)/q(\x) \geq 1$. This means that $\forall \x, p(\x) < q(\x)$. By integrating those positive and continuous functions on their domains, we are lead to the contradiction that the integral of one of them is not equal to 1. Thus, $\exists \x \in \mathcal{X} s.t. p(\x)/q(\x) \geq 1$. Thus, $\sup_{\x \in \mathcal{X}}[p(\x)/q(\x)] \geq 1$, with equality trivially when $p=q$. 
\end{proof}

\begin{proposition*}[\textbf{\ref{prop:imputation_bound}}]
     Under Assumption \ref{ass:cond_hyp}, given $f \in \mathcal{F}, \gHat$ in \eqref{eq:gHat} and $g$ in \eqref{eq:g}, 
    \begin{equation}
        \begin{aligned}
            \epsilon_T(f\circ g) \leq \underbrace{\underbrace{\sup_{\z\sim p(\Z)}[\dfrac{p_S(\Z_2=\z_2|\z_1)}{p_S(\ZHat_2=\z_2|\z_1)}]}_{\mathrm{Imputation~error~on~ S~} \ref{term_Is}} \times \underbrace{\sup_{\z\sim p(\Z)}[\dfrac{p_S(\ZHat_2=\z_2|\z_1)}{p_T(\ZHat_2=\z_2|\z_1)}]}_{\mathrm{Transfer~error~of~Imputation~} \ref{term_Ti}}}_{\mathrm{Imputation~error~on~T~} \ref{term_It}} \times \epsilon_T(f\circ \gHat)
        \end{aligned}
        \tag{\ref{eq:imputation_bound}}
    \end{equation}
    Under Lemma \ref{lemma:lemma_s_bound}, \ref{term_It}=1 is the minimal value reached when $p_S(\Z_2|\Z_1)=p_S(\ZHat_2|\Z_1)$ and $p_S(\ZHat_2|\Z_1)=p_T(\ZHat_2|\Z_1)$. In this case, $\epsilon_T(f\circ g) = \epsilon_T(f\circ \gHat)$.
\end{proposition*}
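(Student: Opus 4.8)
The plan is to rewrite both target risks as integrals over the latent space against one common pointwise loss, use Assumption \ref{ass:cond_hyp} to exchange $p_T(\Z_2|\Z_1)$ for $p_S(\Z_2|\Z_1)$, and then insert in turn the two density ratios appearing in \eqref{eq:imputation_bound} and pull their suprema out of the integral.

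First I would fix a latent labelling function $f_T^{\mathcal{Z}}:\mathcal{Z}\rightarrow\{1,\dots,K\}$ compatible with $f_T$ along the encodings (this is the latent-space reading underlying Assumption \ref{ass:CoS}), so that the pointwise loss $\ell(\z)\triangleq\mathbb{I}(f(\z)\neq f_T^{\mathcal{Z}}(\z))\in\{0,1\}$ governs both risks. Since $g=(g_1,g_2)$ and $\gHat=(g_1,r\circ g_1)$ share the first encoder $g_1$, the marginal of $\Z_1$ on $T$ is identical under $g$ and $\gHat$, and
\begin{align*}
\epsilon_T(f\circ g) &= \int \ell(\z_1,\z_2)\, p_T(\Z_2=\z_2|\z_1)\, p_T(\z_1)\, d\z_1\, d\z_2, \\
\epsilon_T(f\circ \gHat) &= \int \ell(\z_1,\z_2)\, p_T(\ZHat_2=\z_2|\z_1)\, p_T(\z_1)\, d\z_1\, d\z_2.
\end{align*}

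Next I would work on the first integral. Using Assumption \ref{ass:cond_hyp} I replace $p_T(\Z_2|\Z_1)$ by $p_S(\Z_2|\Z_1)$; then, writing $p_S(\Z_2=\z_2|\z_1) = \big[p_S(\Z_2=\z_2|\z_1)/p_S(\ZHat_2=\z_2|\z_1)\big]\, p_S(\ZHat_2=\z_2|\z_1)$ and bounding the bracket by its supremum over the support being integrated — this supremum is precisely the factor \ref{term_Is}, finite and $\geq 1$ by Lemma \ref{lemma:lemma_s_bound} — together with $\ell\geq 0$, I get $\epsilon_T(f\circ g)$ bounded by \ref{term_Is} times $\int \ell(\z_1,\z_2)\, p_S(\ZHat_2=\z_2|\z_1)\, p_T(\z_1)\, d\z_1\, d\z_2$. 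Applying the identical multiply-and-bound step once more, now with the ratio $p_S(\ZHat_2=\z_2|\z_1)/p_T(\ZHat_2=\z_2|\z_1)$, whose supremum is the factor \ref{term_Ti} and again $\geq 1$ by Lemma \ref{lemma:lemma_s_bound}, the leftover integral is exactly $\epsilon_T(f\circ\gHat)$, giving \eqref{eq:imputation_bound} with \ref{term_It} $=$ \ref{term_Is} $\times$ \ref{term_Ti}. For the equality claim, \ref{term_It} $\geq 1$ always, with equality iff both suprema equal $1$, i.e.\ $p_S(\Z_2|\Z_1)=p_S(\ZHat_2|\Z_1)$ and $p_S(\ZHat_2|\Z_1)=p_T(\ZHat_2|\Z_1)$; combining these with Assumption \ref{ass:cond_hyp} yields $p_T(\Z_2|\Z_1)=p_T(\ZHat_2|\Z_1)$, hence $p_T(\Z)=p_T(\ZHat)$, so the two displayed integrals coincide and $\epsilon_T(f\circ g)=\epsilon_T(f\circ\gHat)$.

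I expect the main obstacle to be the very first step: making rigorous the claim that a single pointwise loss $\ell$ controls both $\epsilon_T(f\circ g)$ and $\epsilon_T(f\circ\gHat)$ — one must argue that the error made at a latent point $(\z_1,\z_2)$ on $T$ is the same whether its second coordinate was produced by $g_2(X_2)$ or by $r\circ g_1(X_1)$, which is exactly the covariate-shift-in-latent-space viewpoint behind Assumption \ref{ass:CoS}. Two smaller points also need care: the suprema defining \ref{term_Is} and \ref{term_Ti} must be taken over (at least) the supports against which one integrates, so the pointwise bounds $p/q\leq\sup(p/q)$ are legitimate there, and one needs $p_S(\ZHat_2|\Z_1)>0$ and $p_T(\ZHat_2|\Z_1)>0$ on those supports for Lemma \ref{lemma:lemma_s_bound} to apply. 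Everything else is routine change-of-measure bookkeeping.
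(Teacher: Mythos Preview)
Your proposal is correct and follows essentially the same change-of-measure argument as the paper: rewrite $\epsilon_T(f\circ g)$ as an expectation over $p_T(\Z_1)p_T(\Z_2|\Z_1)$ of the latent indicator loss, invoke Assumption~\ref{ass:cond_hyp}, insert the two density ratios, and pull out their suprema. The only cosmetic difference is order: the paper first bounds by a single ratio $\sup_\z p_T(\Z_2=\z_2|\z_1)/p_T(\ZHat_2=\z_2|\z_1)$ and then factors it into \ref{term_Is}$\times$\ref{term_Ti}, whereas you extract the two suprema one at a time; your version is arguably cleaner, since the paper's line ``$\sup[S_T]=\sup[S_S]\times\sup[\cdot]$'' should really be an inequality $\leq$, which is all that is needed. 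Your identification of the ``main obstacle'' --- the existence of a single latent labelling function $f_T^{\mathcal Z}$ so that the same pointwise loss governs both risks --- is exactly the step the paper leaves implicit when it writes $f_T^z$ without further comment.
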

\begin{proof}
    We denote $f_T^z$, the latent target labeling function. Moreover, for simplicity, we write $h_{\gHat}=f \circ \gHat$, $h_g=f \circ g$ and $\forall \z \sim p(\Z), S_D(\z)=\dfrac{p_D(\Z_2=\z_2|\z_1)}{p_D(\ZHat_2=\z_2|\z_1)}$
    \begin{align*}
        \epsilon_T(h_{g}) &= \mathbb{E}_{\x_T \sim p_T(X)}[\mathbb{I}(h_{g}(\x_T) \neq f_T(\x_T))] \\
        &= \mathbb{E}_{\ztOne \sim p_T(\Z_1), \ztTwo \sim p_T(\Z_2|\Z_1) }[\mathbb{I}(f(\ztOne, \ztTwo) \neq f_T^z(\ztOne, \ztTwo))] \\
        &= \mathbb{E}_{\ztOne \sim p_T(\Z_1), \ztTwoH \sim p_T(\ZHat_2|\Z_1) }[\dfrac{p_T(\Z_2=\ztTwoH|\ztOne)}{p_T(\ZHat_2=\ztTwoH|\ztOne)} \mathbb{I}(f(\ztOne, \ztTwoH) \neq f_T^z(\ztOne, \ztTwoH))] \\
        &\leq \sup_{\z\sim p(\Z)}[S_T(\z)] \mathbb{E}_{\x_T \sim p_T(X)}[\mathbb{I}(h_{\gHat}(\x_T) \neq f_T(\x_T))] \\
        &= \sup_{\z\sim p(\Z)}[S_T(\z)] \epsilon_T(h_{\gHat})
    \end{align*}
    
    However, $\forall \z \in \mathcal{Z}, S_T(\z)$ cannot be computed as there is not supervision possible on $T$. We will instead apply Assumption \ref{ass:cond_hyp} and use source data for which we can compute $S_S(\z)$.
    \begin{align*}
        \forall \z \in \mathcal{Z}~S_T(\z) &=\dfrac{p_T(\Z_2=\z_2|\z_1)}{p_T(\ZHat_2=\z_2|\z_1)} \\ 
        &=\dfrac{p_S(\Z_2=\z_2|\z_1)}{p_T(\ZHat_2=\z_2|\z_1)} &\text{Assumption~} \ref{ass:cond_hyp}\\
        &=\dfrac{p_S(\Z_2=\z_2|\z_1)}{p_S(\ZHat_2=\z_2|\z_1)} \times \dfrac{p_S(\ZHat_2=\z_2|\z_1)}{p_T(\ZHat_2=\z_2|\z_1)} \\
        &=S_S(\z) \times \dfrac{p_S(\ZHat_2=\z_2|\z_1)}{p_T(\ZHat_2=\z_2|\z_1)} \\
    \end{align*}
    Thus by applying $\sup$,
    \begin{align*}
        \sup_{\z\sim p(\Z)}[S_T(\z)] = \sup_{\z\sim p(\Z)}[S_S(\z)] \times \sup_{\z\sim p(\Z)}[\dfrac{p_S(\ZHat_2=\z_2|\z_1)}{p_T(\ZHat_2=\z_2|\z_1)}]
    \end{align*}
    This yields \eqref{eq:imputation_bound}.
    
    If \ref{term_It}=1 when $p_S(\Z_2|\Z_1)=p_S(\ZHat_2|\Z_1)$ and $p_S(\ZHat_2|\Z_1)=p_T(\ZHat_2|\Z_1)$ per Lemma \ref{lemma:lemma_s_bound}, then $S_T(\z)=1$ and $\epsilon_T(f\circ g) = \epsilon_T(f\circ \gHat)$.
\end{proof}

\begin{proposition*}[\textbf{\ref{prop:self_training}}]
Assume a joint distribution $p_{\widetilde{T}}(X,Y)$ where $p_{\widetilde{T}}(X) = p_T(X)$ and $Y = h_\gHat(X)$ where $h_\gHat = f \circ \gHat \in \mathcal{H}_\gHat$ is a candidate hypothesis. Then,
\begin{equation*}
    \lambda_{\mathcal{H}_{\gHat}} \leq \min_{h_\gHat \in \mathcal{H}_\gHat} \left[\epsilon_S(h_\gHat) + \epsilon_{\widetilde{T}}(h_\gHat) + \epsilon_{T}(f_{\widetilde{T}})\right]
\end{equation*}
with $\epsilon_{T}(f_{\widetilde{T}})=\operatorname{Pr}_{\x \sim p_T(X)}(f_{\widetilde{T}}(\x) \neq f_T(\x))$ the error of the pseudo-labelling function $f_{\widetilde{T}}$ on $T$.
\end{proposition*}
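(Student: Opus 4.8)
The plan is to reduce the claim to the elementary triangle inequality for the $0/1$ loss, in the spirit of the standard domain-adaptation arguments of \cite{Ben-David2010}. First I would unfold the definition $\lambda_{\mathcal{H}_{\gHat}}=\min_{f'\in\mathcal{F}}\bigl[\epsilon_S(f'\circ\gHat)+\epsilon_T(f'\circ\gHat)\bigr]=\min_{h\in\mathcal{H}_{\gHat}}\bigl[\epsilon_S(h)+\epsilon_T(h)\bigr]$, so that it suffices to control, for an arbitrary candidate $h=f\circ\gHat\in\mathcal{H}_{\gHat}$, the true target error $\epsilon_T(h)=\Pr_{\x\sim p_T(X)}\bigl(h(\x)\neq f_T(\x)\bigr)$ in terms of quantities involving the pseudo-labelling function $f_{\widetilde{T}}$.

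The core step is to interpose $f_{\widetilde{T}}$ between $h$ and $f_T$. For any three functions $a,b,c$ with values in $\{0,\dots,K\}$ and any $\x$, one has $\mathbb{I}\bigl(a(\x)\neq c(\x)\bigr)\le \mathbb{I}\bigl(a(\x)\neq b(\x)\bigr)+\mathbb{I}\bigl(b(\x)\neq c(\x)\bigr)$; taking $a=h$, $b=f_{\widetilde{T}}$, $c=f_T$ and integrating against $p_T(X)$ gives $\epsilon_T(h)\le \Pr_{\x\sim p_T(X)}\bigl(h(\x)\neq f_{\widetilde{T}}(\x)\bigr)+\Pr_{\x\sim p_T(X)}\bigl(f_{\widetilde{T}}(\x)\neq f_T(\x)\bigr)$. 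Since $p_{\widetilde{T}}(X)=p_T(X)$ and $\widetilde{T}$ carries the labels $Y=f_{\widetilde{T}}(X)$, the first summand is by definition the error $\epsilon_{\widetilde{T}}(h)$ of $h$ on the pseudo-labelled domain, and the second is $\epsilon_T(f_{\widetilde{T}})$; hence $\epsilon_T(h)\le \epsilon_{\widetilde{T}}(h)+\epsilon_T(f_{\widetilde{T}})$ for every $h\in\mathcal{H}_{\gHat}$.

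Finally I would add $\epsilon_S(h)$ to both sides of this pointwise inequality, obtaining $\epsilon_S(h)+\epsilon_T(h)\le \epsilon_S(h)+\epsilon_{\widetilde{T}}(h)+\epsilon_T(f_{\widetilde{T}})$, and then take the minimum over $h\in\mathcal{H}_{\gHat}$ on both sides; this is legitimate because the inequality holds for each $h$ separately. The left-hand side becomes $\lambda_{\mathcal{H}_{\gHat}}$ by the reformulation above, and the right-hand side is precisely the bound in \eqref{eq:extended_uda_bound}, which finishes the argument.

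I do not expect a genuine obstacle here; the one place that needs care is the bookkeeping of the three labelling functions in play — the candidate hypothesis $h=f\circ\gHat$, the pseudo-labelling function $f_{\widetilde{T}}$ defining the auxiliary domain $\widetilde{T}$ with $p_{\widetilde{T}}(X)=p_T(X)$, and the true target labelling function $f_T$ — together with checking that the two terms produced by the triangle inequality match exactly the quantities $\epsilon_{\widetilde{T}}(h)$ and $\epsilon_T(f_{\widetilde{T}})$ appearing in the statement. It is also worth stressing in the write-up why the bound is useful rather than vacuous: $\epsilon_S(h)$ is computable from source labels and $\epsilon_{\widetilde{T}}(h)$ from target pseudo-labels, so only the residual term $\epsilon_T(f_{\widetilde{T}})$ — the intrinsic quality of the pseudo-labels — is inaccessible, and it is precisely this term that self-training is designed to shrink.
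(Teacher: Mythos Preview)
Your proposal is correct and follows essentially the same route as the paper: both interpose $f_{\widetilde{T}}$ via the triangle inequality for the $0/1$ loss (the paper cites \cite{Crammer2006} for this, you derive it from the pointwise indicator inequality), identify the two resulting terms as $\epsilon_{\widetilde{T}}(h)$ and $\epsilon_T(f_{\widetilde{T}})$ using $p_{\widetilde{T}}(X)=p_T(X)$, and then add $\epsilon_S(h)$ and minimize over $h\in\mathcal{H}_{\gHat}$. If anything, your write-up is more explicit than the paper's, which leaves the final ``add $\epsilon_S$ and take the min'' step implicit.
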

\begin{proof} 
    We know that $p_{\widetilde{T}}(X)=p_T(X)$ as instances are not changed by applying the pseudo-labelling function. Thus, given $h_\gHat \in \mathcal{H}_\gHat$
    \begin{equation*}
        \epsilon_{T}(h_\gHat)=\epsilon_{T}(h_\gHat,f_T)=\epsilon_{\widetilde{T}}(h_\gHat, f_T)
    \end{equation*}
    
    Applying the triangle inequality for classification error \cite{Crammer2006},
    \begin{equation*}
        \epsilon_{\widetilde{T}}(h_\gHat, f_T) \leq \epsilon_{\widetilde{T}}(h_\gHat, f_{\widetilde{T}}) + \epsilon_{\widetilde{T}}(f_{\widetilde{T}}, f_T) 
    \end{equation*}
    
    Finally, we can rewrite $\epsilon_{\widetilde{T}}(h_\gHat, f_{\widetilde{T}}) = \epsilon_{\widetilde{T}}(h_\gHat)$ and $\epsilon_{\widetilde{T}}(f_{\widetilde{T}}, f_T)=\epsilon_{T}(f_{\widetilde{T}}, f_T)=\epsilon_{T}(f_{\widetilde{T}})$.
\end{proof}

\section{Dataset description}
\label{sec:dataset_description}
%Experiments are run using a Tesla V100-SXM2 GPU machine with 32GB RAM. Code is written in PyTorch (\url{https://pytorch.org/}). For \texttt{ads-real}, models are trained using internal large-scale distributed infrastructure.
\subsection{Digits}

We scale all images to $32 \times 32$ and normalize the input in $[-1, 1]$. When adaptation involves a domain with three channels (SVHN or MNIST-M) and a domain with a single channel, we simply triplicate the channel of the latter domain. As in \cite{Damodaran2018} we use balanced source batches which proves to increase performance especially when the source dataset is imbalanced (e.g. SVHN and USPS datasets) while the target dataset (usually MNIST derived) is balanced. Scaling the input images enables us to use the same architecture across datasets. In practise the embedding size is 2048 after preprocessing. For missing versions, we set pixel values to zero in a given patch as shown in Figure \ref{fig:missing_digits}. The \texttt{digits} datasets are provided with a predefined train / test split. We report accuracy results on the target test set and use the source test set as validation set (Section \ref{sec:hyperparameter}). The number of instances in each dataset is reported in Table \ref{table:stats}. We run each model five times.

\begin{table}[h!]
    \centering
    \resizebox{\linewidth}{!}{
        \begin{tabular}{ccccc}
            \hline
            & \multicolumn{1}{c}{$\texttt{USPS}$} & \multicolumn{1}{c}{$\texttt{MNIST}$} & \multicolumn{1}{c}{$\texttt{SVHN}$} & \multicolumn{1}{c}{$\texttt{MNIST-M}$} \\ \hline
            \hline
            Train & 7438 & 60k & 73257 & 60k \\
            Test & 1860 & 10k & 26032 & 10k \\
            Size & 28 $\times$ 28 & 28 $\times$ 28 & 32 $\times$ 32 & 28 $\times$ 28 \\
            Channels & 1 & 1 & 3 & 3 \\
            & \includegraphics[width=0.2\textwidth]{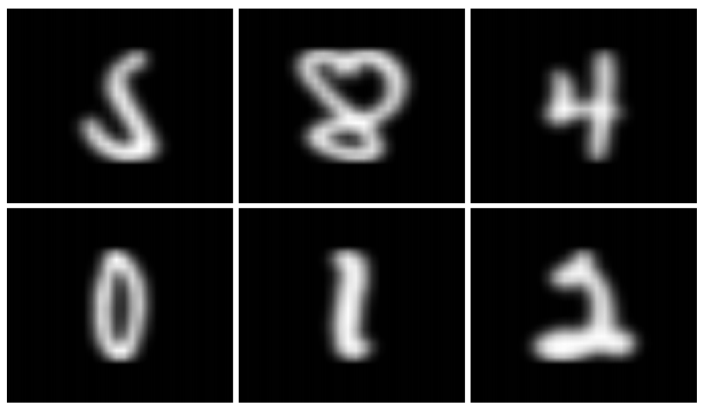} & \includegraphics[width=0.2\textwidth]{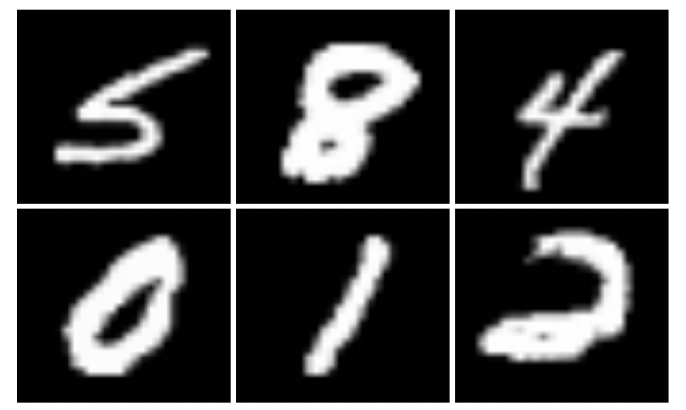} & \includegraphics[width=0.2\textwidth]{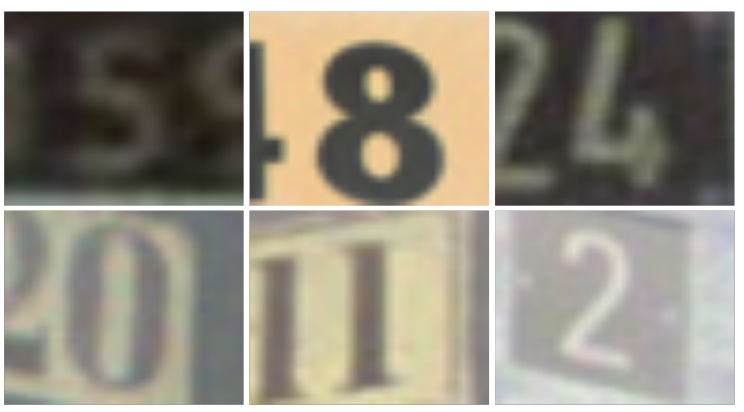} & \includegraphics[width=0.2\textwidth]{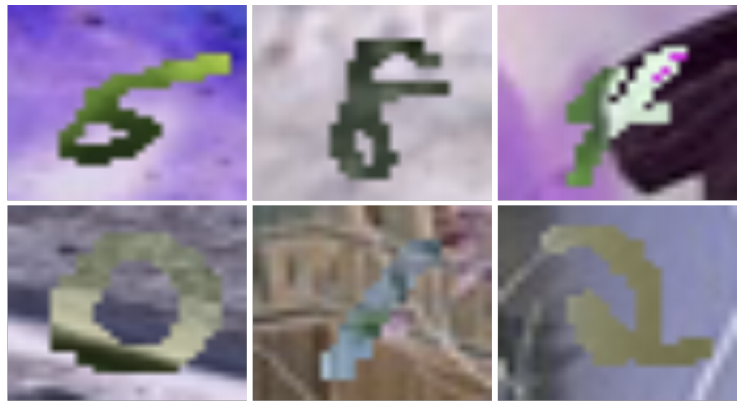} \\
            \hline
        \end{tabular}
    }
    \caption{Statistics on \texttt{digits} datasets}
    \label{table:stats}
\end{table}

\subsection{Amazon}
Each domain has around 2000 samples and we use features freely available at \url{https://github.com/jindongwang/transferlearning/tree/master/data#amazon-review} which follows the data processing pipeline in \cite{Chen2012}. Each review is preprocessed as a feature vector of unigrams and bigrams keeping only the 5000 most frequent features. In practise, we consider the dense version of these features after projection onto a low-dimensional sub-space of dimension 400 with PCA as in \cite{Chen2012}. Datasets with missing features are built by setting the first half of the features to 0. 

\subsection{Ads}

Table \ref{table:stats_criteo} lists statistics on the traffic for the two \texttt{ads} datasets; we now describe how they are preprocessed. On both datasets the train and test sets are fixed. We run each model five times.
\begin{table}[ht!]
    \centering
    \resizebox{0.9\linewidth}{!}{
        \begin{tabular}{ccccccccc}
            \hline
            Dataset & \multicolumn{4}{c}{\texttt{ads-kaggle}} & \multicolumn{4}{c}{\texttt{ads-real}}  \\ \hline
            \hline
            Domain & \multicolumn{2}{c}{Source} & \multicolumn{2}{c}{Target} & \multicolumn{2}{c}{Source} & \multicolumn{2}{c}{Target} \\ \hline
            \hline
            Split & \texttt{Train} & \texttt{Test} & \texttt{Train} & \texttt{Test} & \texttt{Train} & \texttt{Test} & \texttt{Train} & \texttt{Test} \\ \hline
            \hline
            Positive & 246.872 & 61.841 & 92.333 & 22.943 & X & X & X & X \\
            Negative & 699.621 & 174.783 & 854.160 & 213.681 & X & X & X & X \\
            Total & 946.493 & 236.624 & 946.493 & 236.624 & 24.465.756 & 3.760.233 & 819.073 & 147.358 \\
            $p(Y=1)$ & 0,2608 & 0,2613 & 0,0976 & 0,0970 & X & X & X & X \\
            \hline
        \end{tabular}
    }
    \caption{Statistics on \texttt{ads} datasets}
    \label{table:stats_criteo}
\end{table}

\paragraph{$\texttt{ads-kaggle}$} 
The Criteo Kaggle dataset is a reference dataset for CTR prediction and groups one week of log data. The objective is to model the probability that a user will click on a given ad given his browsing behaviour. Positives refer to clicked ads and negatives to non-clicked ads. For each datum, there are 13 continuous and 26 categorical features. We divide the traffic into two domains using feature number 30 corresponding to an engagement feature; for a given value for this categorical feature, all instances have a single missing numeric feature (feature number 5). We then construct an artificial dataset simulating transfer between known and new users. We process the original Criteo Kaggle dataset to have an equal number of source and target data. We then perform train / test split on this dataset keeping 20$\%$ of data for testing. We used in our experiments only continuous features; to show the benefit of modelling additional missing features, we extend the missing features list to features 1, 5, 6, 7, 11 and 12 by setting them to zero on the target domain. After these operations, 6 features are missing and 7 are non-missing. Preprocessing consists in normalizing continuous features using a log transform.

\paragraph{$\texttt{ads-real}$} 
This private dataset is similar to $\texttt{ads-kaggle}$. We filter out non-clicks and the final task is to model the sale probability for a clicked ad given an user's browsing history. Positives refer to clicked ads which lead to a sale; negatives to clicked ads which did not lead to a sale. We use one week of sampled logs as a training set and use the following day data as the test set. This train / test definition is used so to better correlate with the performance of a production model. Features are aggregated across user timelines and describe the clicking and purchase behavior of a user. In comparison to $\texttt{ads-kaggle}$ more continuous features are used. The count features can be User-centric i.e. describe the global activity of the user (number of clicks, displays, sales done globally across partners) or User-partner features i.e. describing the history of the user on the given partner (number of clicks, sales... on the partner). The latter are missing for prospecting users. Counts are aggregated across varying windows of time and categories of partner catalog. We bucketize these count features using log transforms and project the features into an embedding space of size $596$ with 29 features. 12 features are missing and 17 are non-missing. 

\newpage
\begin{minipage}{\linewidth}
    \centering
    \scalebox{0.7}{
        \begin{tabular}{ccccc}
            \hline
            Domain & \multicolumn{1}{c}{Source} & \multicolumn{1}{c}{Target} \\ \hline
            feature 1 & 0.80$\rpm$2.21 & 4.4 $\times 10^{-4}\rpm$0.041 \\
            feature 2 & 9.16$\rpm$13.04  & 9.01$\rpm$13.42 \\
            feature 3 & 4.40$\rpm$6.32 & 3.44$\rpm$6.19 \\
            feature 4 & 2.58$\rpm$3.27  & 0.94$\rpm$2.31 \\
            feature 5 & 61.09$\rpm$37.67 & 0.0$\rpm$0.0 \\
            feature 6 & 11.26$\rpm$12.24 & 0.090$\rpm$1.69 \\
            feature 7 & 4.10$\rpm$6.23 & 0.0034$\rpm$0.13\\
            feature 8 & 5.12$\rpm$4.50 & 1.91$\rpm$4.26 \\
            feature 9 & 14.32$\rpm$11.57 & 3.273$\rpm$5.36\\
            feature 10 & 0.046$\rpm$0.22 & 1.35 $\times 10^{-5} \rpm$0.0037 \\
            feature 11 & 1.08$\rpm$2.11 & 4.25 $\times 10^{-4}\rpm$0.029 \\
            feature 12 & 0.083$\rpm$0.78 & 6.68 $\times 10^{-5}\rpm$0.018 \\
            feature 13 & 2.74$\rpm$3.59 & 1.21$\rpm$3.36 \\
            \hline
        \end{tabular}
    }
    \captionof{table}{Feature mean and standard deviation on $\texttt{ads-kaggle}$. We set features 1, 6, 7, 11, 12 to zero on $T$.}
    \label{table:hist_tab_kaggle}
    
    \centering
    \includegraphics[width=0.8\textwidth]{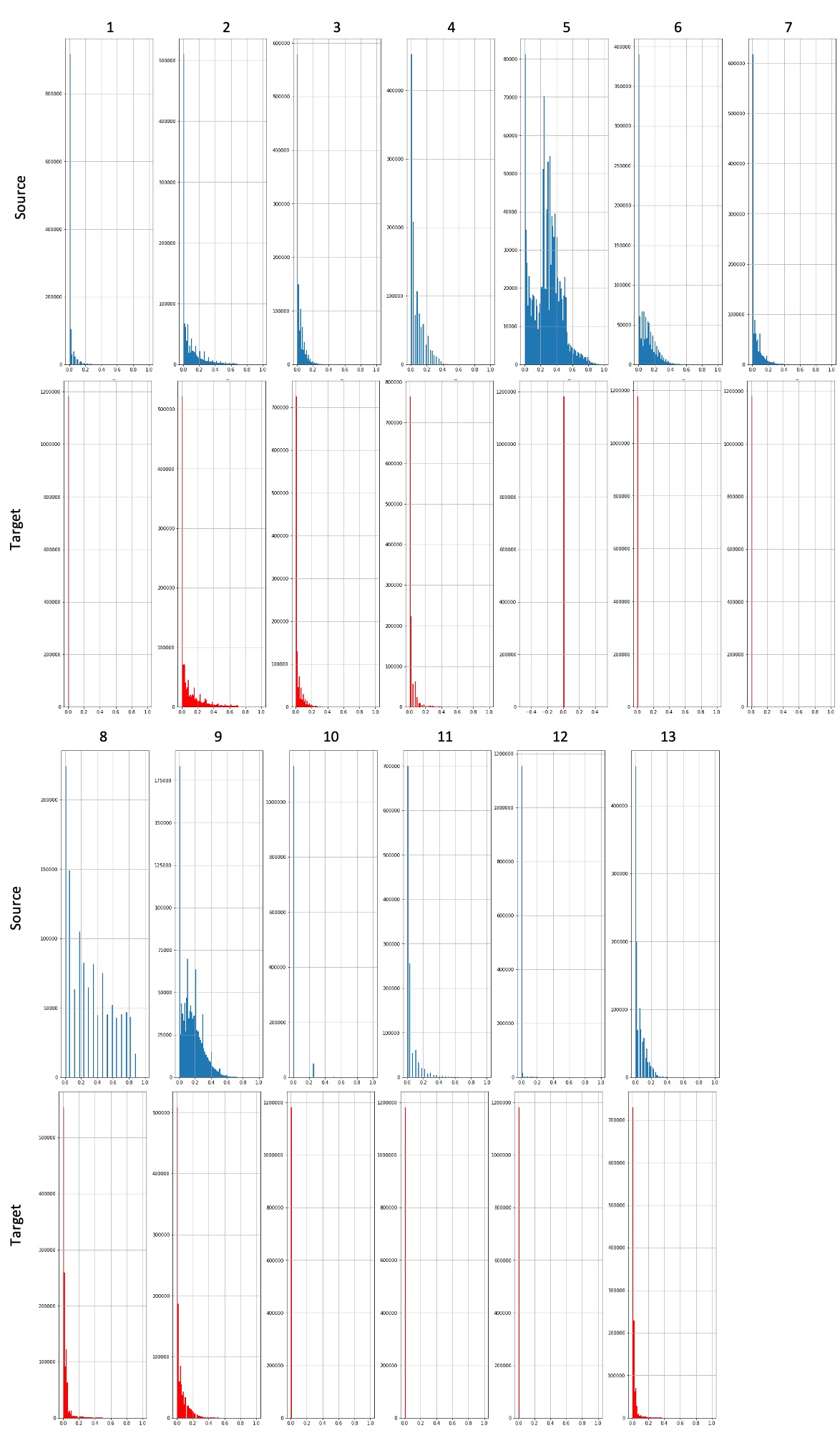}
    \captionof{figure}{Source (blue) and Target (red) distributions on \texttt{ads-kaggle} for each feature (1 to 13)}
    \label{fig:hist_kaggle}
\end{minipage} 

\section{Implementation details}
\label{sec:implementation_details}

\subsection{Neural Net architecture}

\paragraph{digits}
We use the \texttt{ADV} and \texttt{OT} versions of our imputation model. For \texttt{ADV} models, we use the DANN model description in \cite{Ganin2015}; for \texttt{OT} we use the DeepJDOT model description in \cite{Damodaran2018}. Both models can be considered as simplified instances of our corresponding \texttt{ADV} and \texttt{OT} imputation models when no imputation is performed. Performance of the adaptation models is highly dependent on the NN architectures used for adaptation and classification. In order to perform fair comparisons and since our goal is to evaluate the potential of joint Adaptation-imputation-classification, we selected these architectures through preliminary tests and use them for both the \texttt{ADV} and \texttt{OT} models. The two models are described below and illustrated in Figure \ref{fig:adv_archi}.

\begin{itemize}[leftmargin=0.5cm]
    \item Feature extractors $g_1$ and $g_2$ consists of three convolutional layers with $5 \times 5$ kernel and 64 filters interleaved with max pooling layers with a stride of 2 and $2 \times 2$ kernel. The final layer has 128 filters. We use batch norm on convolutional layers and ReLU as an activation after max pooling. As in \cite{Damodaran2018} we find that adding a sigmoid activation layer as final activation is helpful.
    \item Classifier $f$ consists of two fully connected layers with 100 neurons with batch norm and ReLU activation followed by the final softmax layer. We add Dropout as an activation for the first layer of the classifier.
    \item Discriminator $D_1$ and $D_2$ is a single layer NN with 100 neurons, batch norm and ReLU followed by the final softmax layer. On USPS $\rightarrow$ MNIST and MNIST $\rightarrow$ USPS dataset we use a stronger discriminator network which consists of two fully connected layers with 512 neurons.
    \item Generator $r$ consists of two fully connected layers with 512 neurons, batch norm and ReLU activation. This architecture is used for \texttt{ADV} and \texttt{OT} imputation models. In practice using wider and deeper networks increases classification performance with the more complicated classification tasks (SVHN $\rightarrow$ MNIST, MNIST $\rightarrow$ MNIST-M); in these cases we add an additional fully connected network with 512 neurons. The final activation function is a sigmoid.
\end{itemize}
We use the same architecture described above for all our models to guarantee fair comparison. As a side note, the input to the imputation model's classifier is twice bigger as in the standard adaptation models.

\begin{figure}[h!]
\centering
\includegraphics[width=0.9\textwidth]{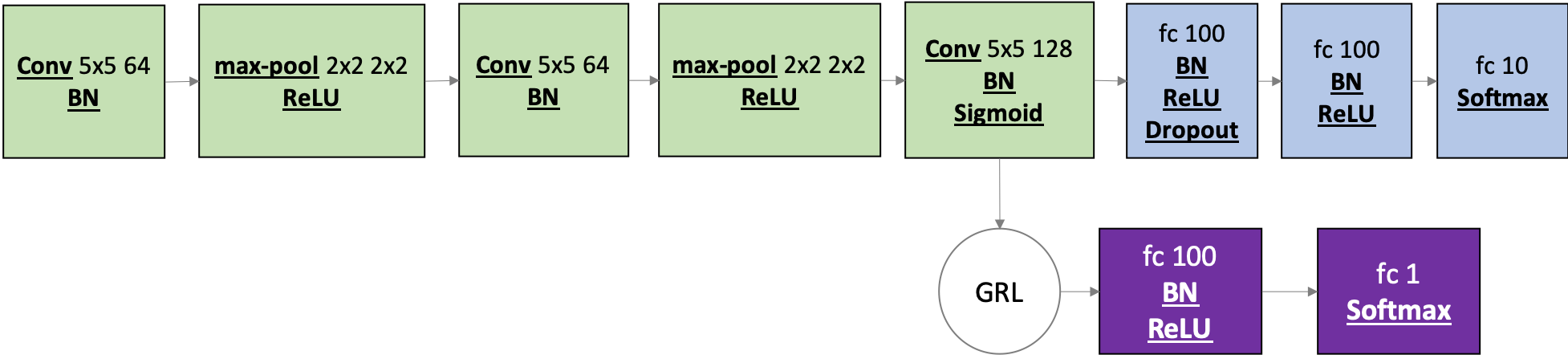}
\caption{Base architecture for the \texttt{ADV} DANN model}
\label{fig:adv_archi}
\end{figure}

\paragraph{\texttt{ads-kaggle} and \texttt{amazon}}
We experiment with \texttt{ADV} models only. As input data is numeric and low dimensional, architectures are simpler than in \texttt{digits}. Our feature extractor is a three layered NN with 128 neurons and with a final sigmoid activation. The classifier is taken to be a single layered NN with 128 neurons and a final softmax layer. Activations are taken to be ReLUs. The domain discriminator is taken to be a two layered NN with 128 neurons and a final softmax layer. Finally the reconstructor is taken to be a two-layered NN with 256 neurons and final sigmoid activation.

\paragraph{\texttt{ads-real}}
We experiment with \texttt{ADV} models only. Input features after processing are fed directly into the feature extractors $g_1$, $g_2$ consisting of two fully connected layers with 128 neurons. The classifier and discriminator is taken to be single-layered NN with 25 neurons. The reconstructor is taken to be a two-layered NN with 128 neurons. Inner activations are taken to be ReLUs and the final activation of the feature extractor is taken to be a sigmoid.

\subsection{Network parameters}

\subsubsection{Hyperparameter tuning}
\label{sec:hyperparameter}

Tuning hyperparameters for UDA is tricky as we do not have access to target labels and thus cannot choose parameters minimizing the target risk on a validation set. Several papers set hyperparameters through reverse cross-validation \cite{Ganin2015}. Other approaches developed for model selection are based on risk surrogates obtained by estimating an approximation of the risk value on the source based on the similarity of source and target distributions (without the labels). In the experiments, we used a recent estimator, Deep Embedded Validation (DEV) \cite{You2019} for tuning the initial learning rate and for the \texttt{OT} imputation model, tuning $\lambda_{1}$ and $\lambda_{OT}$. For other parameters, we used heuristics and typical hyperparameter values from UDA papers (such as batch size) without further tuning.
%The DEV estimator embeds adapted feature representation into the validation procedure to obtain unbiased estimation of the target risk with bounded variance.
We use a cross entropy link function on the source validation set; this value provides a proxy for the target test risk. Using parameters from the original paper, this estimator helps select parameter ranges which perform reasonably well. We keep the estimator unchanged for our baseline models. In the imputation case, the discriminator used for computing importance sampling weights discriminates between $\zsH$ and $\ztH$ i.e. $D_1$ (Figure \ref{fig:imput_model}).

\subsubsection{Digits}

We find that the results are highly dependent on the NN architecture and the training parameter setting. In order to evaluate the gain obtained with \textit{Adaptation-Imputation}, we use the same NN architecture for all models (\texttt{ADV} and \texttt{OT}) but fine tune the learning rates for each model using the DEV estimator (other parameters do not have a significant impact on the classification performance).

\paragraph{\texttt{ADV}} We use an adaptive approach as in \cite{Ganin2015} for decaying the learning rate $lr$ and updating the gradient's scale $s$ between 0 and 1 for the domain discriminators.
We choose the decay values used in \cite{Ganin2015} ie. $s = \dfrac{2}{1 + \exp(-10 \times p)} - 1$ and $lr = \dfrac{lr_i}{(1 + 10 \times p)^{0,75}}$ where $p$ is ratio of current batches processed over the total number of batches to be processed without further tuning. We tune the initial learning rate $lr_i$, chosen in the range $\{10^{-2}, 10^{-2.5}, 10^{-3}, 10^{-3.5}, 10^{-4}\}$ following Section \ref{sec:hyperparameter}. In practise we take $lr_i = 10^{-2}$ for \texttt{ADV} \textit{Adaptation-Imputation}, \textit{Adaptation-Full}, \textit{Adaptation-IgnoreComponent} and $lr_i = 10^{-2.5}$ for \texttt{ADV} \textit{Adaptation-ZeroImputation}. We use Adam as the optimizer with momentum parameters $\beta_1=0.8$ and $\beta_2=0.999$ and use the same decay strategy and initial learning rate for all components (feature extractor, classifier, reconstructor). Batch size is chosen to be 128; we see in practise that initializing the adaptation models with a source model with smaller batch size (such as 32) can be beneficial.

\paragraph{\texttt{OT}} We choose parameter $\lambda_{OT} = 0.1$ in Equation \ref{eq:imput_ot} after tuning in the range $\{10^{-1}, 10^{-2}, \\ 10^{-3}\}$ using DEV. We weight $L_1$ in Equation \ref{eq:overall_loss} by $\lambda_{1} = 0.1$. Following \cite{Damodaran2018}, batch size is taken to be 500 and we use EMD a.k.a. Wasserstein-2 distance. We initialize adaptation models with a source model in the first 10 epochs and divide the initial learning rate by two as adaptation starts for non-imputation models. For \textit{Adaptation-Imputation} we follow a decaying strategy on the learning rate and on the adaptation weight as explained in the next item. We choose $lr_i$ in the range $\{10^{-2}, 10^{-2.5}, 10^{-3}, 10^{-3.5}, 10^{-4}\}$. In practise we fix $lr_i = 10^{-2}$ for all models.

\paragraph{Imputation parameters} Ablation studies are conducted in Section \ref{sec:ablation_studies} on weights in Equation \ref{eq:imput_l2}; in \texttt{digits} experiments we choose $L_2 = L_{MSE} + L_{ADV}$ for \texttt{ADV} and \texttt{OT} to reduce the burden of additional feature tuning. For \texttt{ADV} model, we fix $\lambda_1=\lambda_2=\lambda_3=1$ in Equation \ref{eq:overall_loss}. In the \texttt{OT} model, we vary $\lambda_1$ between 0 and 0.1 and $\lambda_2$ between 0 and 1 following the same schedule as the gradient scale update for \texttt{ADV} models to reduce variance. %\ar{For the \texttt{OT} imputation model we noticed that the initial gradients from the alignment problem are high and degrade classification performance if no stop gradient operation is added after $g_1(x_{S_1})$ before feeding this embedding into the generator}.

\subsubsection{Ads}

We use an adaptive strategy for updating the gradient scale and the learning rate with the same parameters as in the $\texttt{digits}$ dataset. Optimizer is taken to be Adam. Batch size is taken to be big so that target batches include sufficient positive instances. 

\paragraph{\texttt{ads-kaggle}}
The initial learning rate is chosen in the range $\{10^{-4}, 10^{-5}, 10^{-6}, 10^{-7}\}$ using DEV and fixed to be $10^{-6}$ for all models. Batch size is taken to be 500 and we initialize models with a simple classification loss for five epochs. We run models for 50 epochs after which we notice that models reach a plateau. We find that adding a weighted MSE term allows to achieve higher stability (as measured by variance) as further studied in Section \ref{sec:ablation_studies}. In a similar fashion to \cite{Pathak2016}, we tune this weight in the range $\{1, 10^{-1}, 10^{-2}, 7.5 \times 10^{-3}, 5 \times 10^{-3}, 10^{-3}\}$. We find that $0.005$ offers the best compromise between mean loss and variance. Moreover on this dataset we use a faster decaying strategy for the discriminator's $D_2$ and the reconstructor's $r$ learning rate, $lr = \dfrac{lr_i}{(1 + 30 \times p)^{0,75}}$ to achieve higher stability in the training curves while the feature extractor $g_1$, $g_2$ and $D_1$'s learning rate are unchanged.

\paragraph{\texttt{ads-real}}
The initial learning rate is chosen in the range $\{10^{-4}, 10^{-5}, 10^{-6}\}$ and fixed to be $10^{-6}$ for all models. The learning rate is decayed with the same parameters as \texttt{digits} for all models. We run models for ten epochs which provides a good trade-off between learning time and classification performance. Batch size is taken to be 500. We choose $L_2 = L_{MSE} + L_{ADV}$ without further tuning; this achieves already good results. %512

\subsection{Amazon}
We use the same hyperparameters as \texttt{ads-kaggle}. $\lambda_{MSE}$ is set to 1 without further tuning.

\section{Latent space visualization on \texttt{digits}}
\label{sec:appendix_embedding}

In this section we visualize the embeddings $\zH=\gHat(\z)$ learned by the various models on \texttt{digits} by projecting the embeddings in a 2D space using $\gHat$ with t-SNE (the original embedding size being 2048). Figure \ref{fig:embedding_adv_m_m} represents the embeddings learned for \texttt{ADV} models on MNIST $\rightarrow$ MNIST-M. Figures \ref{fig:embedding_ot_m_m} and \ref{fig:embedding_ot_m_u} represent these embeddings for \texttt{OT} models respectively on MNIST $\rightarrow$ MNIST-M and MNIST $\rightarrow$ USPS. On these figures, we see that \textit{Adaptation-Imputation} generates feature representations that overlap better between source and target examples per class than the adaptation counterparts (although \textit{Adaptation-IgnoreComponent} does a good job at overlapping feature representations). This correlates with the accuracy performance on the target test set. Moreover we notice, as expected, that \textit{Adaptation-IgnoreComponent} and \textit{Adaptation-ZeroImputation} perform badly compared to \textit{Adaptation-Full} which justifies the use of \textit{Adaptation-Imputation} when confronted to missing non-stochastic data.

\begin{figure}[ht!]
\centering
\includegraphics[width=0.71\textwidth]{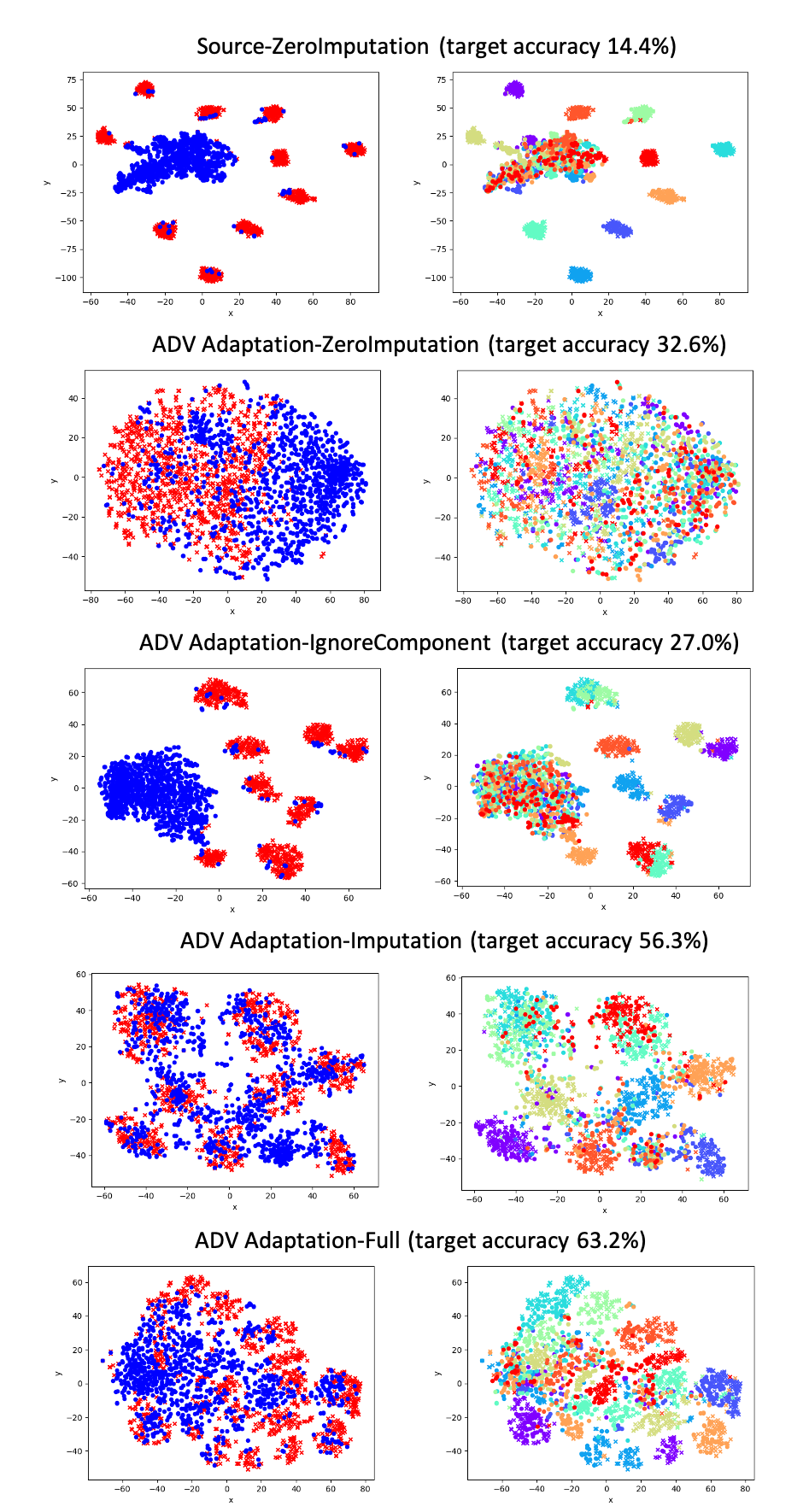}
\caption{Embeddings for MNIST $\rightarrow$ MNIST-M dataset for \texttt{ADV} models on a batch. Figures on the left represent the source (red) and target (blue) clusters; Figures on the right represent the classes on source and target.}
\label{fig:embedding_adv_m_m}
\end{figure}
%source full (acc 19.5$\%$) (g) (h) 

\begin{figure}[ht!]
\centering
\includegraphics[width=0.71\textwidth]{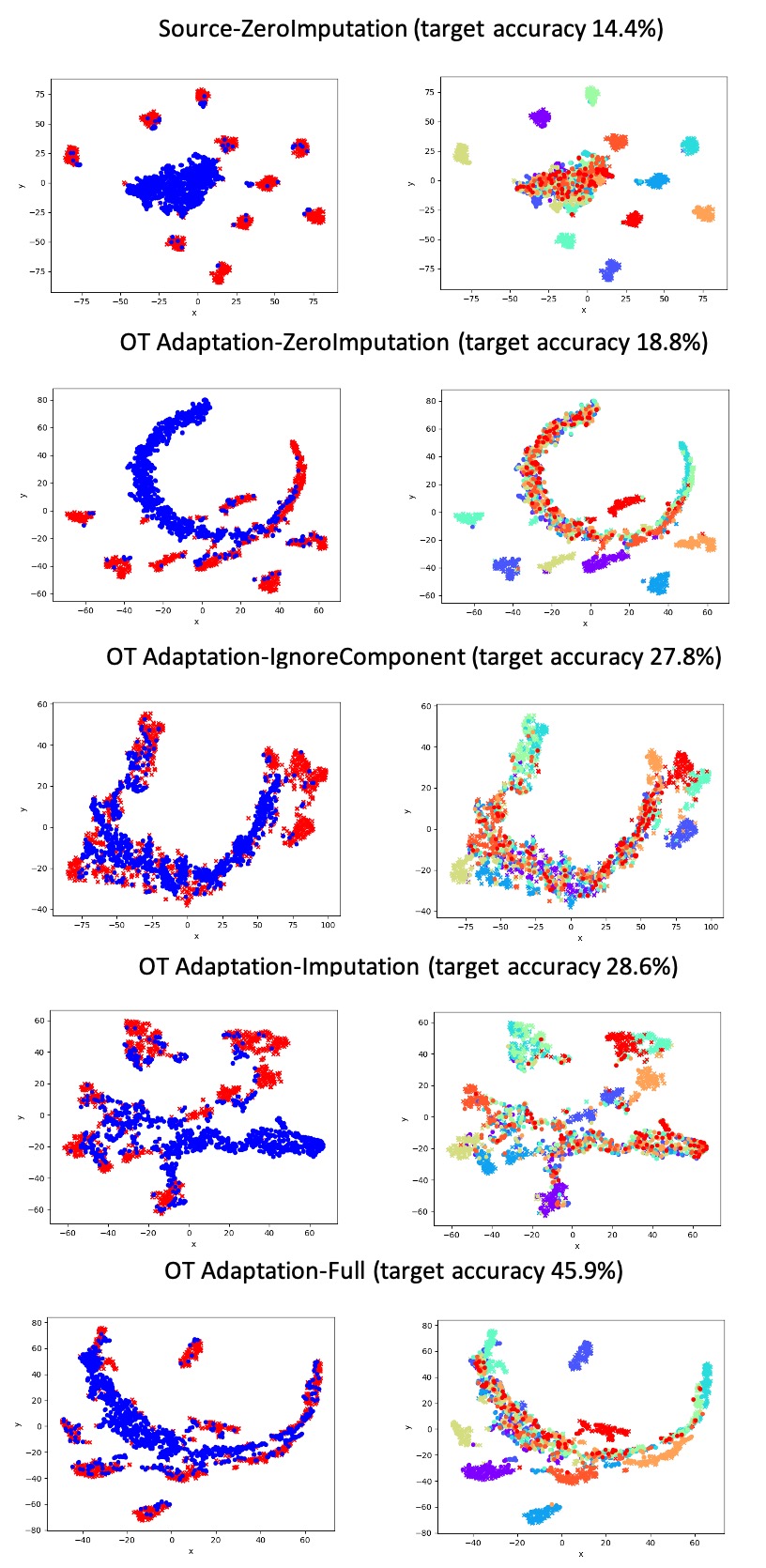}
\caption{Embeddings for MNIST $\rightarrow$ MNIST-M dataset for \texttt{OT} models on a batch. Figures on the left represent the source (red) and target (blue) clusters; Figures on the right represent the classes on source and target.}
\label{fig:embedding_ot_m_m}
\end{figure}

\begin{figure}[ht!]
\centering
\includegraphics[width=0.71\textwidth]{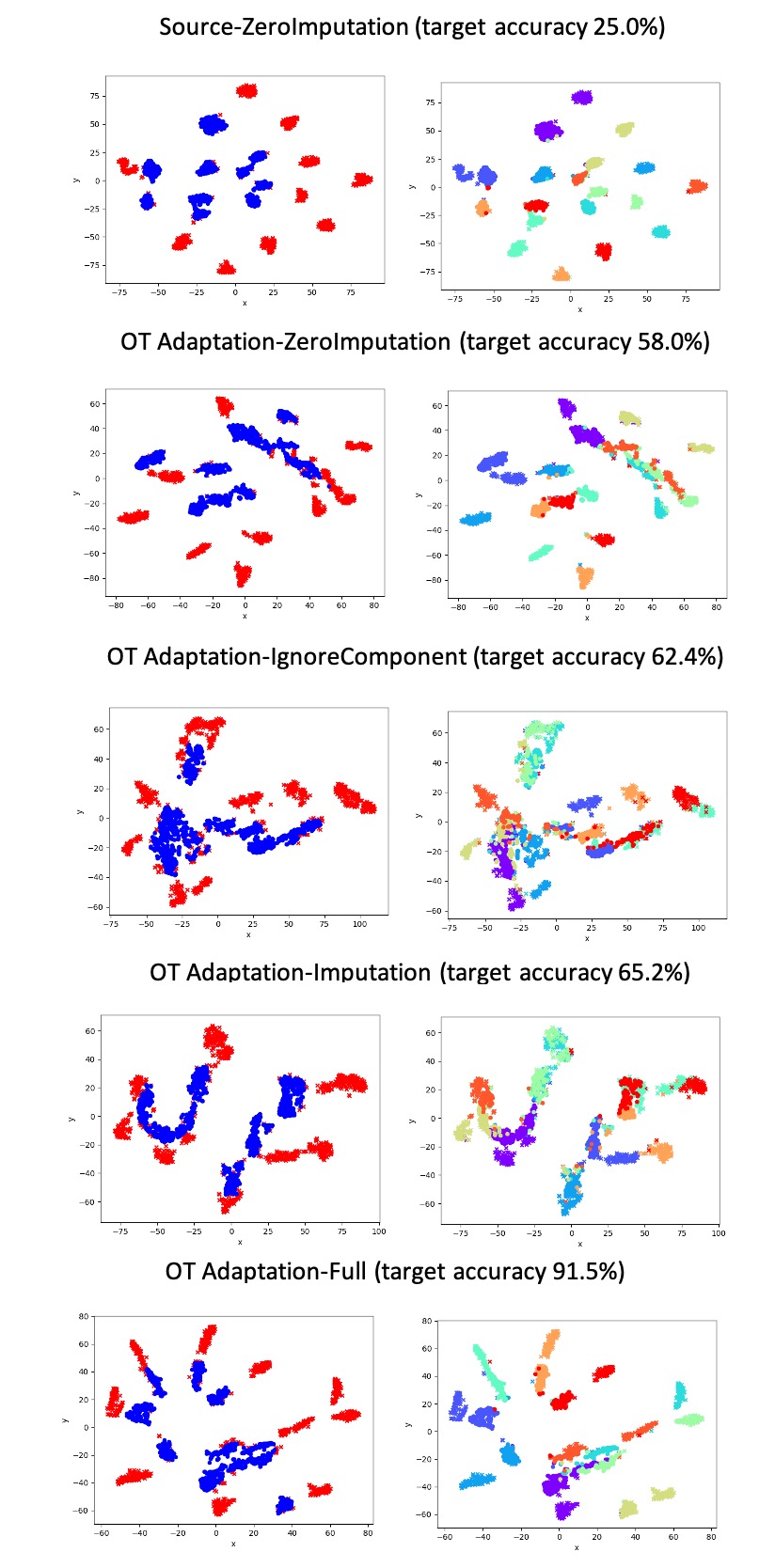}
\caption{Embeddings for MNIST $\rightarrow$ USPS dataset for \texttt{OT} models on a batch. Figures on the left represent the source (red) and target (blue) clusters; Figures on the right represent the classes on source and target.}
\label{fig:embedding_ot_m_u}
\end{figure}
% source full (acc 56.6$\%$) (g) (h) 
\FloatBarrier

\end{document}